\newtheorem{theorem}{Theorem}
\newtheorem{lemma}[theorem]{Lemma}
\newtheorem{proposition}[theorem]{Proposition}
\newenvironment{proof}[1][Proof]{\textit{#1.} }{\qedsymbol}
\newcommand{\qedsymbol}{\hspace{\fill}\rule{1.5ex}{1.5ex}}
\newcommand{\change}[1]{#1}
\begin{document}

\title{Distributed \change{Training of} \\Graph Convolutional Networks\vspace{.3cm}}
\author{Simone Scardapane, Indro Spinelli,~\IEEEmembership{Student Member, IEEE}, and Paolo Di Lorenzo,~\IEEEmembership{Senior Member, IEEE}
\thanks{Corresponding author email: simone.scardapane@uniroma1.it}
\thanks{The authors are with the Department of Information Engineering, Electronics and Telecommunications (DIET), Sapienza University of Rome, Italy. Email: \{simone.scardapane,indro.spinelli,paolo.dilorenzo\}. This work was funded by ``Bandi di Ateneo per la Ricerca 2019'', Sapienza University of Rome.}\vspace{-.5cm}}

\markboth{Preprint published on IEEE Transaction on Signal and Information Processing over Networks}%
{Spinelli \MakeLowercase{\textit{et al.}}: Distributed training for GCNs}

\maketitle

\begin{abstract}
 The aim of this work is to develop a fully-distributed algorithmic framework for training graph convolutional networks (GCNs). The proposed method is able to exploit the meaningful relational structure of the input data, which are collected by a set of agents that communicate over a sparse network topology. After formulating the centralized GCN training problem, we first show how to make inference in a distributed scenario where the underlying data graph is split among different agents. Then, we propose a distributed gradient descent procedure to solve the GCN training problem. The resulting model distributes computation along three lines: during inference, during back-propagation, and during optimization. Convergence to stationary solutions of the GCN training problem is also established under mild conditions. Finally, we propose an optimization criterion to design the communication topology between agents in order to match with the graph describing data relationships. A wide set of numerical results validate our proposal. To the best of our knowledge, this is the first work combining graph convolutional neural networks with distributed optimization.
\end{abstract}

\begin{IEEEkeywords}
Graph convolutional networks, distributed optimization, networks, consensus.
\end{IEEEkeywords}

\section{Introduction}

Nowadays, graph-based data are pervasive, with applications ranging from social networks \cite{newman2002random} to epidemiology \cite{jombart2011reconstructing}, recommender systems \cite{berg2017graph}, cyber-security \cite{allamanis2017learning}, sensor networks \cite{romero2017kernel}, natural language processing \cite{bastings2017graph}, genomics \cite{lee2010discovering}, and many more. In fact, graphs are the natural format for any data where the available structure is given by pairwise relationships.

In the field of machine learning, graphs can be found in two varieties. In the case of \textit{graph models}, such as graph neural networks (GNNs) \cite{bruna2014spectral,romero2017kernel,kipf2017semi,bronstein2017geometric}, graph kernels \cite{kang2012fast,kondor2016multiscale}, or manifold regularization techniques \cite{belkin2006manifold,cai2010graph}, input data are provided in the form of a graph (such as relations among friends in a social network), and the model must exploit this relational structure when evaluating its output. For example, many classes of GNNs alternate local operations on the components of the graph with message-passing steps between neighboring nodes \cite{bronstein2017geometric}. 
In this case, edges represent \textit{semantic} connections between two data points, such as relations in a knowledge base or friendship in a social network.

Alternatively, in \textit{distributed optimization} of machine learning models, the graph represents the underlying physical communication pattern between agents, which possess only a partial view on the overall dataset and/or model. For convex models, notable examples include the alternating direction method of multipliers (ADMM) \cite{boyd2011distributed,ouyang2013stochastic}, diffusion optimization \cite{cattivelli2008diffusion}, and distributed sub-gradient methods \cite{nedic2009distributed}.
 More recently, several distributed optimization models have been proposed also in the non-convex case \cite{bianchi2012convergence,di2016next,vlaski2019distributed,vlaski2019distributed2,yang2016parallel}, with applications to machine learning \cite{scardapane2017framework,george2019distributed}. In this setting, the graph imposes stringent constraints on the way information can be propagated across the nodes, i.e., an operation is allowed only if its implementation can be done through message passing between two neighboring agents.
 
The majority of works on distributed optimization of machine learning models assumes that different examples (be they vectors, images, or similar) are independently distributed across physical agents, with no relation among them. In several situations, however, data has meaningful relational structure that can be exploited. In this context, a distributed optimization algorithm should be able to exploit both kinds of connections, physical and semantic. In this paper, we investigate a problem at the boundary of these two worlds. In particular, we assume that a graph is available representing similarity among data points, but these points are also physically distributed among multiple agents. A second \textit{communication} graph then represents feasible communication paths among the agents. 

\change{Our algorithm is tailored to situations in which data is naturally represented in terms of a graph, requiring graph neural network models for its processing. Furthermore, data are collected by a set of independent agents, and sharing local information with a central processor is either unfeasible or not economical/efficient, owing to the large size of the network and volume of data, time-varying network topology, energy constraints, and/or privacy issues.} For example, this is the case of sensor networks \cite{akyildiz2002wireless}, in which energy requirements restrict communication between sensors, and at the same time their spatial relation correlates their measurements. \change{A practical case is studied in Sec. \ref{subsec:traffic}, where we consider a network of distributed traffic sensors. We assume that the sensors send data to a fixed set of base stations (the \textit{agents} in our paper), which in turn communicate between them to perform the training of a graph neural network model to forecast traffic.} Similarly, we can think of medical entities possessing text databases, wherein distributed communication is used to ensure privacy, while the data graph can describe relations (e.g., citations) among texts and/or patients. \vspace{-.1cm}

\begin{figure*}
    \centering
    \includegraphics[width=0.97\textwidth]{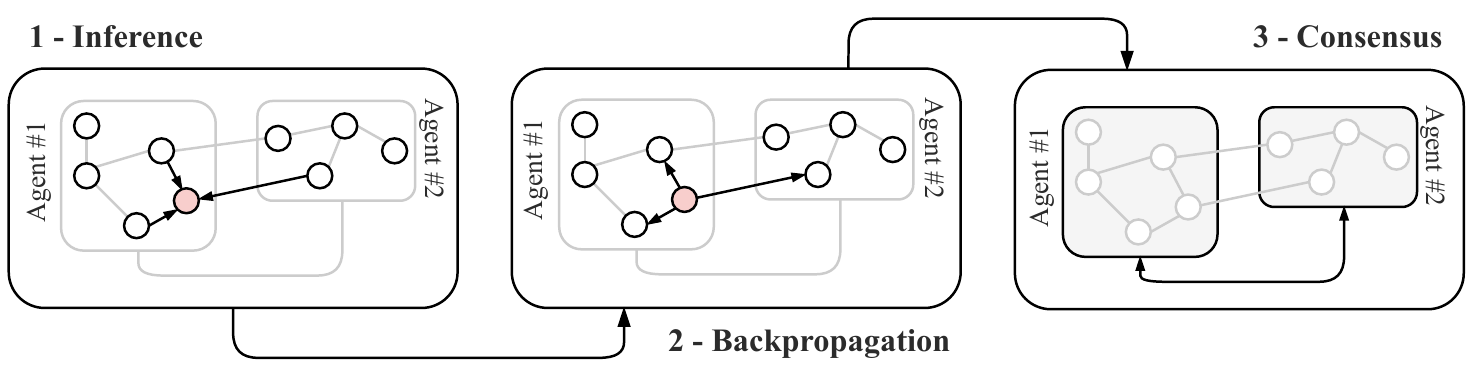}
    \caption{Illustration of the proposed approach. In step 1, nodes communicate to perform inference. In step 2, a symmetric communication phase is executed to compute local gradients. In step 3, agents exchange local variables to asymptotically reach agreement. For steps 1-2, a representative active node is shown in red. Directed arrows show the flow of messages.}
    \label{fig:proposed_approach}
\end{figure*}

\subsection*{Related works}
In this paragraph, we review the relevant literature on graph-based machine learning, optimization, and signal processing, considering separately the single research areas.

\subsubsection*{Distributed optimization}
Classical machine learning techniques assume that the training data is given by a set of independently distributed samples. In distributed optimization, it is further assumed that these samples are distributed across multiple agents having only sparse connectivity, either because of communication or privacy concerns \cite{sayed2014adaptation}. The objective is to devise globally convergent algorithms that can be implemented with local exchanges across agents. \change{As mentioned in the introduction, this class of approaches includes ADMM \cite{boyd2011distributed}, diffusion methods \cite{cattivelli2008diffusion}, non-convex distributed gradient descent \cite{bianchi2012convergence}, sub-gradient methods \cite{nedic2009distributed,nedic2010constrained}, projection-based methods \cite{di2020distributed,nassif2020adaptation,nassif2020adaptation2}, and several others, e.g., \cite{di2016next,vlaski2019distributed,vlaski2019distributed2}. Some works can also be found in the specific application of distributed optimization of neural networks \cite{scardapane2017framework}. Differently from this class of works, in this paper we assume that the data has an additional relational structure, in the form of pairwise connections between training points. Such structure typically induces a coupling among the objective functions of the agents, which must be properly taken into account in the development of tailored distributed optimization methods.} In fact, graph data has been receiving renewed interest in the machine learning literature, as detailed next.

\subsubsection*{Graph signal processing}
The field of graph signal processing (GSP) is motivated with extending classical signal processing tools to graph-based data \cite{ortega2018graph}. \change{Notable results include the definition of a graph Fourier transform \cite{ortega2018graph}, uncertainty principle \cite{tsitsvero2016signals},
graph filters \cite{sandryhaila2014discrete,isufi2016autoregressive,liu2018filter}, and adaptive reconstruction strategies \cite{di2018adaptive}. Closer to our work, several authors have considered completely distributed strategies for GSP techniques, including distributed recursive least-squares \cite{di2016distAdaGraph,di2018adaptive}, distributed graph filtering \cite{coutino2019advances}, diffusion adaptation for graph signals \cite{nassif2018distributed,hua2020online}, and also kernel-based approaches \cite{romero2017kernel}.} Linear and kernel methods, however, lack the flexibility in modelling nonlinear data with respect to modern deep neural networks. In addition, the majority of these works assume that the connections between data points exactly match the communication structure underlying the agents; an assumption that we relax in this paper.

\subsubsection*{Graph neural networks}
Similarly to GSP, the field of graph neural networks (GNNs) tries to extend the impressive results of deep neural networks to graph-based data. The literature on GNNs is vast and we refer to recent surveys for an overview \cite{bronstein2017geometric,zhou2018graph,wu2019comprehensive,bacciu2019gentle}. Most GNNs can be categorized in three broad classes: message-passing neural networks (MPNNs) \cite{micheli2009neural}, extensions of recurrent networks to graph-based data \cite{scarselli2008graph}, and non-linear extensions of GSP techniques, going under the broad name of geometric deep learning \cite{bronstein2017geometric}. The last class originates from \cite{bruna2014spectral}, which proposed to use graph Fourier transforms to replicate classical filtering operations used in convolutional neural networks (CNNs). \change{Depending on the type of graph filter used for the graph convolution, a number of GNNs with varying characteristics can be obtained \cite{defferrard2016convolutional,kipf2017semi,gama2020signals}. In the limiting case of a linear filter, the resulting network only requires messages with $1$-hop neighbors and can be implemented as a MPNN. The resulting order-1 GCN \cite{kipf2017semi} (or simply GCN for shortness in the paper) has gained widespread popularity, especially for semi-supervised learning on graphs.} While the output of a GNN (and the corresponding back-propagation rule) naturally decomposes over the nodes of the graph \cite{scarselli2008graph}, its computation requires that all nodes possess a shared copy of the GNN weights, which is not feasible in the distributed scenario considered here.

\change{\subsubsection*{Parallel and federated learning}
Several authors have considered \textit{parallel} implementations of GNNs \cite{lerer2019pytorch,zhu2019aligraph} to exploit multiple workers in parallel to optimize large-scale GCNs. The aim of these works, however, is only to scale and speed up training, with the possibility of freely exchanging data and parameters between the coordinator and the workers. More in general, federated learning (FL) \cite{konevcny2016federated,yang2019federated} considers the scenario where data belongs to multiple owners, with possibly strong privacy and/or communication constraints. The vast majority of FL works, however, considers the presence of one or more coordinators of the training process, while we work in a fully-distributed scenario. In addition, no author to the best of our knowledge has considered a scenario where both data and agents have a meaningful connectivity structure.
}

\subsection*{Contributions of the paper}

\change{In this paper, we propose the first algorithm for distributed training of GCNs in the scenario described above. We extend the basic graph convolutional network (GCN) model \cite{kipf2017semi}, incorporating ideas from
distributed optimization \cite{di2020distributed,nassif2020adaptation,nassif2020adaptation2} to perform inference and training using only message-passing exchanges among different agents. Interestingly, the resulting model is naturally distributed along three different lines: (i) during prediction, agents communicate among them according to the standard GCN model to compute their output; (ii) during backpropagation, a symmetric set of communication steps is executed to compute a local gradient for updating the model agent-wise; and (iii) a local consensus step is done to ensure convergence to a common architecture across the entire network. We refer to Fig. \ref{fig:proposed_approach} for a pictorial description of the approach. The convergence properties of the proposed strategy to stationary solutions of the GCN training problem are also analyzed in some detail. The method hinges on distributed projected gradient methods \cite{di2020distributed,nassif2020adaptation,nassif2020adaptation2}, extending them to the case of nonseparable objective functions, whose coupling comes from the relational structure given by the data graph.} 

In our formulation, we assume that the data graph is given, while the user has a certain flexibility in designing the connectivity between the physical agents. To this end, we also propose a simple (convex) criterion to optimize the connectivity between agents and ensure convergence of the distributed GCN training, given the underlying constraints arising from the data graph. The convex problem is solved using a low-complexity ADMM algorithm, which is applicable also in the case of large number of network agents.

We experimentally evaluate our distributed GCN on a number of benchmark datasets for graph neural network models, associating a data sub-graph to each different agent. We illustrate that our protocol can reach performance on par with a standard centralized GCN, and vastly superior to any neural network that does not exploit the relational structure in the data. Also, we evaluate the robustness of the model in the case where the communication graph between agents is not consistent with the structure of the data, and we show that it scales favorably even for sparse communication graphs. \change{Finally, we apply our method to a vehicular traffic prediction problem, using real data collected by a network of sensors located in the San Francisco area \cite{chen2001pems}.}

\subsubsection*{Organization of the paper}

In Sec. \ref{sec:graph_convolutional_networks} we introduce the basic GCN model. Sec. \ref{sec:distributed_training_for_gcns}, the main innovative contribution of the work, describes our distributed setup, the proposed training algorithm, and its convergence properties. In Sec. \ref{sec:optimizing_consensus_matrix}, we describe a criterion to optimize the agents' connectivity in order to match with the data graph. Finally, we numerically validate the proposed approach in Sec. \ref{sec:experimental_evaluation}, before drawing some conclusions in Sec. \ref{sec:conclusions_and_future_work}.

\subsubsection*{A note on indexing}
Throughout the paper, we consider two types of graphs, either related to the data (\textit{data graph}) or related to the connectivity of the physical agents (\textit{communication graph}). To simplify reading, we use $i$, $j$, $l$ to index nodes in the data graph, whereas $k$ and $z$ are used to index agents inside the communication graph.

\section{Graph convolutional networks}
\label{sec:graph_convolutional_networks}

\change{In this section, we introduce the GCN model \cite{wu2019comprehensive}. We focus on order-1 GCNs \cite{kipf2017semi} (corresponding to an order-1 FIR filter over the graph \cite{gama2020signals}) to enhance readability. Our framework can however be applied to any graph neural network that can be implemented via iterative message-passing over a graph, e.g., GAT \cite{velivckovic2017graph}, GINN \cite{xu2018powerful}, and several others. In particular, we show an extension to order-$P$ GCNs, equivalent to a generic FIR filter over the graph \cite{gama2020signals,isufi2020edgenets}, in Section \ref{subsec:extension_to_order_p}.}

Consider a generic \textit{data graph} $\mathcal{G}_D = \left(\mathcal{V}_D, \mathcal{E}_D\right)$, where $\mathcal{V}_D = \left\{1, \ldots, n\right\}$ is the set of $n$ vertexes (each corresponding to a data point), and $\mathcal{E}_D$ a set of edges between them. The edges may encode different kinds of relationships among the data as, e.g., similarities, correlations, causal dependencies, etc. The graph is equivalently described by a (weighted) matrix $\mathbf{D} \in \mathbb{R}^{n \times n}$ encoding adjacency information (i.e., a shift operator, in the GSP terminology). Every vertex is associated with a data vector $\mathbf{x}_i \in \mathbb{R}^d$ (e.g., a bag-of-words for a text, or a vector of features for a user in a social network). A subset $\mathcal{T}_D \subset \mathcal{V}_D$ of vertexes has associated a corresponding label $y_i$, $i \in \mathcal{T}_D$. The task of (semi-supervised) node classification aims at assigning a label also to the remaining set $\mathcal{V}_D \setminus \mathcal{T}_D$.

\change{An order-1 graph convolutional (GC) layer} is defined by (ignoring biases for simplicity, but w.l.o.g.):
\begin{equation}
    \mathbf{H} = \phi\left( \mathbf{D}\mathbf{X}\mathbf{W} \right) \,,
    \label{eq:gc_layer}
\end{equation}
where $\mathbf{X} \in \mathbb{R}^{n \times d}$ is a matrix collecting all vertex features row-wise, $\mathbf{W}\in\mathbb{R}^{d \times q}$ is a matrix of trainable coefficients, and $\phi$ is an element-wise non-linear function (e.g., for ReLU $\phi(s)=\max\left(0, s\right)$). The matrix $\mathbf{D}$ encodes edge information in the data graph, and can be chosen as any graph shift operator such as the Laplacian matrix, or normalized versions of it.

A graph convolutional network (GCN) $f$ is built by stacking multiple layers in the form \eqref{eq:gc_layer}. For example, a two-layered GCN for classification is described by:
\begin{equation}
    f(\mathbf{W},\mathbf{V};\mathbf{X}) = \text{softmax}\left( \mathbf{D} \, \phi\left( \mathbf{D}\mathbf{X}\mathbf{W}\right)\mathbf{V} \right) \,,
    \label{eq:two_layer_gcn}
\end{equation}
where $\mathbf{W}$ and $\mathbf{V}$ are the two matrices of adaptable coefficients. The generalization to more than two layers is straightforward. Note that the GCN in \eqref{eq:two_layer_gcn} acts on the data graph through the matrix $\mathbf{D}$. For instance, the output of \eqref{eq:two_layer_gcn} for a single data point depends on the $2$-hop neighbourhood of the node. In general, for a GCN with $K$ layers of the form \eqref{eq:gc_layer}, the output for node $i$ will depend on neighbours up to $K$ steps away. We denote this set by $\mathcal{N}_i^K$. Also, denote by $\hat{y}_i$ the prediction of the GCN on node $i$ (i.e., the $i$-th element of $f$), and by $\mathbf{w}$ the vector collecting all the trainable coefficients of $f$ [e.g., $\mathbf{w} = \left[ \text{vec}(\mathbf{W}); \text{vec}(\mathbf{V}) \right]$ in \eqref{eq:two_layer_gcn}]. Then, the optimal trained vector $\mathbf{w}^*$ of the GCN is obtained by solving the optimization problem:
\begin{equation}
    \mathbf{w}^* = \arg\min_{\mathbf{w}} \left\{ \frac{1}{\lvert \mathcal{T}_D \rvert}\sum_{i \in \mathcal{T}_D} l(\mathbf{w};\hat{y}_i,y_i) \right\} \,,
    \label{eq:opt}
\end{equation}
where $l(\mathbf{w};\cdot,\cdot)$ is a suitable loss function, e.g., squared loss for prediction, or cross-entropy for multi-class classification.  Problem (\ref{eq:opt}) is generally solved in a centralized manner by gradient descent or some accelerated first-order algorithm \cite{wu2019comprehensive}.

\section{Distributed Graph Convolutional Networks}
\label{sec:distributed_training_for_gcns}

In this section, we introduce our model for distributed GCNs, devising a simple, but very effective, algorithmic solution that performs inference and training using only message-passing exchanges in the neighbourhood of each agent.

\subsection{Problem setup}

Let us consider a setting in which the vertexes of the data graph $\mathcal{V}_D$ (i.e., the data points) are distributed across a set of physically separated agents (Fig. \ref{fig:proposed_approach}, left, where every white node represents a vertex in the data graph $\mathcal{V}_D$). We assume that the number $m$ of agents is much lower than the number $n$ of data points, i.e., $m \ll n$. The agents are physically distributed, and they can communicate among them according to a \textit{communication graph} (or agents' graph) $\mathcal{G}_C = \left(\mathcal{V}_C, \mathcal{E}_C\right)$. That is, agent $k$ can send messages to agent $z$ only if the link $(k,z) \in \mathcal{E}_C$.\footnote{Recall that we use indexes $k$ and $z$ to refer to an agent, while indexes $i$ and $j$ to refer to data nodes, like in the previous section.} No centralized coordinator is assumed to be available. Node $i$ in the data graph is assigned to agent $a(i) \in \mathcal{V}_C$, where we refer to $a(\cdot)$ as the assignment function that matches data points with agents. The data graph and assignment function are supposed to be given in the current problem setup. We also assume that, whenever there is an edge $(i, j) \in \mathcal{E}_D$ in the data graph, there is a corresponding viable communication path $(a(i), a(j)) \in \mathcal{E}_C$.\footnote{\change{Whenever this is not valid, we might remove the edge from the original data graph, eventually re-normalizing the weighted adjacency matrix. For some data graphs (e.g., similarity graphs) this is feasible, and we evaluate the effect on performance in Sec. \ref{subsec:results_with_sparse_connectivity}.}} 
Once this assumption is satisfied, the rest of the communication graph can be optimized to balance the amount of messages exchanged on the agents' network and the speed of convergence of our algorithm. In Section \ref{sec:optimizing_consensus_matrix}, we describe an optimization procedure to  select a connectivity matrix that satisfies both the agreement of the agents' graph with the data graph, and a user-selected level of global connectivity.

Before proceeding, we introduce some additional notation that will be helpful in the sequel. The set of training points available to agent $k$ is given by:
\begin{equation}
\mathcal{T}_k = \left\{ i \,\vert\, a(i) = k \right\} \,,
\label{eq:training_set_k}
\end{equation}
that is, training points assigned by $a(i)$ to agent $k$. Alternatively, we also use a shorthand $\mathbb{I}_k(a(i))$ to define an indicator function evaluating the proposition $a(i) = k$. Finally, we denote by $\mathcal{C}_i^K = \left\{ a(i) \; \vert \; i \in \mathcal{N}_i^K\right\}$ the set of agents having access to (at least one) of the $K$-hop neighbors of the data point $i$. We underline that $\mathcal{N}_i^K$ and $\mathcal{C}_i^K$ are different: the first one is a set of data points, whereas the latter is a set of agents. For instance, all the neighbors of datum $i$ can be on the same agent $k$, or they can be found across multiple agents (see Fig. \ref{fig:proposed_approach}). This makes the design of a fully-distributed strategy especially challenging due to the coupling between data and communication graphs. Naturally, we have that $\mathcal{C}_i^1 \subseteq \mathcal{C}_i^2 \subseteq \ldots \subseteq \mathcal{C}_i^K$.

\subsection{Distributed GCN Inference}

To devise a distributed GCN, we need to rewrite the basic layer \eqref{eq:gc_layer} in a way that is amenable for a fully-distributed implementation. Then, stacking multiple layers of this form will naturally provide us with a distributed GCN model. In particular, the operation of a GC layer is composed of a local component (the right-most multiplication by $\mathbf{W}$), and a communication step that is coherent with the communication topology under our assumption. However, the matrix $\mathbf{W}$ is a global parameter, and it is unrealistic to assume that all agents have access to it in the distributed scenario that we are considering in this work. Thus, we relax the model by assuming that each agent has its own set of weights $\mathbf{W}_k$, i.e., a local copy of the global variable $\mathbf{W}$ (see. Sec. III.C for distributed training of the local weights). In such a case, the $i$-th output row of the GC layer in (\ref{eq:gc_layer}) reads as:
\begin{equation}
    \mathbf{h}_i = \phi\left( \sum_{j \in \mathcal{N}_i} D_{ij} \Big[ \mathbf{W}_{a(j)}^T \mathbf{x}_j \Big] \right) \,,
    \label{eq:gc_layer_dist}
\end{equation}
where $\mathcal{N}_i$ is the (one-hop) inclusive neighborhood of vertex $i$. For a GC layer in the form \eqref{eq:gc_layer_dist}, the $i$-th output depends on the agents in the set $\mathcal{C}_i^1$. The question is: What messages should be sent across the agents' network for computing layer \eqref{eq:gc_layer_dist}? To evaluate $\eqref{eq:gc_layer_dist}$ at node $i$, the agent $k = a(i)$ must receive the following messages from agents $z \in \mathcal{C}_i^1$:
\begin{equation}
     \sum_{j \in \mathcal{N}_i} \mathbb{I}_z(a(j)) \cdot D_{ij}\left[ \mathbf{W}^T_z \mathbf{x}_j \right].
    \label{eq:messages_sent_for_inference}
\end{equation}
In particular, \change{denoting by $B_{kz} = \sum_{(i,j) \in \mathcal{E}_D} \mathbb{I}_k(a(i))\mathbb{I}_z(a(j))$ the number of data points that share a connection from agent $z$ to agent $k$} (e.g., $B_{12} = 2$ in Fig. \ref{fig:proposed_approach}), and assuming that the GC layer inputs and outputs have size $H$, the two agents $z$ and $k$ must exchange $B_{kz}H$ values for the inference phase.

Stacking multiple layers in the form \eqref{eq:gc_layer_dist}, we obtain a fully-distributed version of a GCN. For example, the $i$-th output of a two-layer GCN can be written as:
\begin{equation}
\hat{y}_i = \phi\left( \sum_{l \in \mathcal{N}_i} D_{il} \mathbf{V}_{a(l)}^T \phi\left( \sum_{j \in \mathcal{N}_l} D_{lj} \Big[ \mathbf{W}_{a(j)}^T \mathbf{x}_j \Big] \right) \right) \,.
\label{eq:two_layer_gcn_dist}
\end{equation}
The expression in \eqref{eq:two_layer_gcn_dist} depends on $\mathbf{V}_k$ for $k \in \mathcal{C}_i^1$, but also on $\mathbf{W}_k$ for $k \in \mathcal{C}_i^2$. Thus, increasing the number of layers of the GCN, we enlarge the set of multi-hop neighbors of node $i$ in the data graph, thus potentially increasing the number of agents participating in the prediction \eqref{eq:two_layer_gcn_dist}. Note that \eqref{eq:two_layer_gcn_dist} can be implemented by two sequential communication steps over the network of agents, both exploiting messages in the form \eqref{eq:messages_sent_for_inference}.

\subsection{Distributed GCN training}
\label{eq:distributed_gcn_training}

To train the GCN in a distributed fashion, we need to solve \eqref{eq:opt} exploiting only message-passing over the communication graph. Let $\mathbf{w}_k\in \mathbb{R}^p$ be the vector comprising all trainable parameters of agent $k$ (e.g., $\mathbf{w}_k = \left[ \text{vec}(\mathbf{W}_k); \text{vec}(\mathbf{V}_k) \right]$ in \eqref{eq:two_layer_gcn_dist}). Then, we reformulate \eqref{eq:opt} introducing a consensus constraint that forces all the local copies $\{\mathbf{w}_k\}_{k=1}^m$ to be equal:  
\begin{align}\label{eq:opt_dist}
    \underset{\mathbf{w}_1, \ldots, \mathbf{w}_m}{\arg\min} & \;\;\; \frac{1}{\lvert \mathcal{T_D}\rvert} \sum_{i \in \mathcal{T}_D} l(\mathbf{w}_1, \ldots, \mathbf{w}_m;\hat{y}_i, y_i) \\
    \text{s.t.} & \;\;\; \mathbf{w}_1 = \mathbf{w}_2 = \ldots = \mathbf{w}_m \,. \nonumber
\end{align}
\change{Inspired by the methods in \cite{di2020distributed,nassif2020adaptation,nassif2020adaptation2}}, we proceed minimizing \eqref{eq:opt_dist} by means of a distributed projected gradient algorithm, which we will explain in the sequel. 

Letting $\overline{\mathbf{w}}=[\mathbf{w}_1^T,\ldots,\mathbf{w}_m^T]^T$ be the stack of all agents' vectors, the agreement constraint set in \eqref{eq:opt_dist} can be equivalently rewritten as:
$$\mathcal{S}=\bigg\{\overline{\mathbf{w}}\;|\; \left(\frac{1}{m}\mathbf{1}_m\mathbf{1}_m^T \otimes \mathbf{I}_p\right)\overline{\mathbf{w}}=\overline{\mathbf{w}}\bigg\},$$ 
where $\otimes$ is the Kronecker product, $\mathbf{1}_m$ is the $m$-dimensional vector of all ones, and $\mathbf{I}_p$ is the $p$-dimensional identity matrix. In other words, the matrix 
\begin{equation}\label{Oth_proj_consensus}
\change{\mathbf{\Pi}_{\mathcal{S}}}=\frac{1}{m}\mathbf{1}_m\mathbf{1}_m^T \otimes \mathbf{I}_p
\end{equation}
is the orthogonal projection operator onto the consensus space. 
Then, letting $$L(\overline{\mathbf{w}})=\frac{1}{\lvert \mathcal{T}_D \rvert}\sum_{i \in \mathcal{T}_D} l(\overline{\mathbf{w}};\hat{y}_i, y_i),$$
a centralized projected gradient algorithm would proceed by iteratively minimizing \eqref{eq:opt_dist} as:
\begin{equation}\label{Cent_gradient_proj}
    \overline{\mathbf{w}}^{t+1} =  \mathbf{\Pi}_{\mathcal{S}}\big(\overline{\mathbf{w}}^t - \eta^t \nabla_{\overline{\mathbf{w}}} L(\overline{\mathbf{w}}^t) \big) \,, 
\end{equation}
where $\eta^t$ is a (possibly time-varying) step-size sequence. 

In principle, the procedure in \eqref{Cent_gradient_proj} can be implemented only by a central processor that has access to all the data. Indeed, in first place, the function $L(\overline{\mathbf{w}})$ is not separable among the agents's variables \change{(a key difference with respect to \cite{di2020distributed,nassif2020adaptation,nassif2020adaptation2})}. However, thanks to the specific coupling induced by the data graph, $\nabla_{\overline{\mathbf{w}}} L(\overline{\mathbf{w}}^t)$ in \eqref{Cent_gradient_proj} can still be computed in a distributed fashion. To see this, let $\nabla_{\mathbf{w}_k} L(\overline{\mathbf{w}}^t)$ be the gradient component of $\nabla_{\overline{\mathbf{w}}} L(\overline{\mathbf{w}}^t)$ associated with agent $k$. This $k$-th sub-component can be computed locally by exchanging data only within the neighborhood of agent $k$, with the aim of performing a distributed backpropagation step (cf. Fig. \ref{fig:proposed_approach}). In particular, considering a GCN with $K$ layers,  the set of all agents involved in the computation of $\nabla_{\mathbf{w}_k}L(\overline{\mathbf{w}}^t)$ is given by:
\begin{equation}
\overline{\mathcal{C}}_k = \bigcup_{i \in \mathcal{T}_k} \mathcal{C}_i^K \,,
\label{eq:C_definition}
\end{equation}
which means that
\begin{equation}\label{local_gradient}
\nabla_{\mathbf{w}_k}L(\overline{\mathbf{w}}^t)=\nabla_{\mathbf{w}_k} L(\{\mathbf{w}_z^t\}_{{z \in \overline{\mathcal{C}}_k}}).
\end{equation}
\change{The equality in \eqref{local_gradient} can be seen readily from the definition of $\overline{\mathcal{C}}_k$ in \eqref{eq:C_definition}. Consider for example an order-1 GC layer in the form \eqref{eq:gc_layer_dist}, and an agent $z \notin \overline{\mathcal{C}}_k$. Then, by the definition in \eqref{eq:training_set_k}, we have that $D_{ij} = 0$ in \eqref{eq:gc_layer_dist} and \eqref{eq:messages_sent_for_inference} whenever $a(i)=k$ and $a(j)=z$, so that no component in $L$ can depend on $\mathbf{w}_z^t$.} Similarly to the inference phase (see Sec. III.B), the computation of (\ref{local_gradient}) requires (possibly sequential) communication steps, where network agents send messages that are symmetric to those in \eqref{eq:messages_sent_for_inference}, as can be seen by explicitly differentiating \eqref{eq:two_layer_gcn_dist} [or (\ref{eq:gc_layer_dist})].
Note that, in this phase, nodes without labeled examples will act in a passive way, simply diffusing the required messages across the network.

Now, even if the gradient step can be computed in a distributed fashion, the projection step performed in \eqref{Cent_gradient_proj} through the multiplication by the matrix $\mathbf{\Pi}_{\mathcal{S}}$ is inherently a global averaging step from all agents. In the distributed setting that we consider in this work, there is no central agent that can perform such data fusion step. Thus, to find a distributed implementation capable of solving problem (\ref{eq:opt_dist}), we substitute the projection operator $\mathbf{\Pi}_{\mathcal{S}}$ in \eqref{Cent_gradient_proj} with a sparse combination matrix $\mathbf{C}$, whose sparsity pattern reflects the topology of the communication graph. In the sequel, we illustrate the properties of the combination matrix $\mathbf{C}$.

\textit{Assumption A:} The matrix $\mathbf{C}\in\mathbb{R}^{m \times m}$ satisfies the following properties:
\begin{description}
  \item[(A1)] The matrix is symmetric  and doubly stochastic, i.e., $\mathbf{1}^T\mathbf{C}=\mathbf{1}^T$ and $\mathbf{C}\mathbf{1}=\mathbf{1}$;\smallskip
  \item[(A2)]   
  $\rho\left(\mathbf{C}-\displaystyle\frac{1}{m}\mathbf{1}_m\mathbf{1}_m^T\right)\leq 1-\gamma<1$, where $\rho(\mathbf{X})$ denotes the spectral radius of matrix $\mathbf{X}$. 
  \change{\item[(A3)] $C_{kz}= 0$ if the edge $(k,z) \notin \mathcal{E}_C$, for all $(k,z)$, i.e., the matrix reflects the connectivity between agents.}
\end{description}
Assumption (A2) ensures that the communication graph is connected, and information can propagate all over the network. \change{Indeed, w.l.o.g., any combination matrix can be written as $\mathbf{C}=\mathbf{I}-\mathbf{L}$, where $\mathbf{L}$ is a weighted Laplacian matrix. By construction, $\mathbf{C}$ has a constant eigenvector associated with the unitary eigenvalue, whose multiplicity is equal to the number of connected components of the graph \cite{di2020distributed}. Thus, A2 states that, once we remove the constant eigenvector (i.e., the consensus subspace) from $\mathbf{C}$, the spectral radius of the resulting matrix is strictly less than one, thus ensuring graph connectivity and stability of the consensus recursion \cite{nedic2009distributed}.} Then, under (A1) and (A2), the matrix $\mathbf{C}$ satisfies:
\begin{equation}
    \lim_{t\rightarrow \infty } \mathbf{C}^t = \frac{1}{m}\mathbf{1}_m\mathbf{1}_m^T,
\end{equation}
i.e., the iterative application of the combination matrix $\mathbf{C}$ performs in the limit the orthogonal projection onto the agreement set $\mathcal{S}$. In conclusion, the proposed distributed training algorithm has the following structure:
\begin{equation}\label{Dist_gradient_proj}
    \overline{\mathbf{w}}^{t+1} =  \left(\mathbf{C} \otimes \mathbf{I}_p\right)\bigg(\overline{\mathbf{w}}^t - \eta^t \nabla_{\overline{\mathbf{w}}} L(\overline{\mathbf{w}}^t) \bigg) \,. 
\end{equation}
Recursion (\ref{Dist_gradient_proj}) can now be implemented in a fully-distributed fashion. Then, exploiting (\ref{local_gradient}), and letting 
\begin{equation}
\boldsymbol{\psi}_k^t=\mathbf{w}_k^t-\eta^t \nabla_{\mathbf{w}_k} L(\{\mathbf{w}_z^t\}_{{z \in \overline{\mathcal{C}}_k}})
\label{eq:dist_gd}
\end{equation}
be the result of the $k$-th component of the gradient step in (\ref{Dist_gradient_proj}), the update of the local estimate at agent $k$ is given by:
\begin{equation}
    \mathbf{w}^{t+1}_k = \sum_{z \in \mathcal{N}_k} C_{kz}\boldsymbol{\psi}_z^t \,,
    \label{eq:consensus_step}
\end{equation}
where $\mathcal{N}_k$ is the inclusive neighborhood of agent $k$ in the communication graph (including agent $k$ itself). 

The main steps of the proposed distributed training strategy for GCNs are summarized in Algorithm 1.

\begin{algorithm}[t]
\caption{\textbf{: Distributed GCN training and inference}}
\label{algo:distributed_gcn_training}
\vspace{.1cm}
\textbf{Data:} Number of iterations $T$; matrix $\mathbf{C}$ satisfying (A1)-(A3); step-size sequence $\eta^t$.
\begin{algorithmic}[1]
%\STATE Compute $\mathbf{C}$ with Algorithm \ref{algo:admm}.
\STATE Randomly initialize $\mathbf{w}_1^0, \ldots, \mathbf{w}_m^0$.
\FOR{$t \in \left\{0, \ldots, T-1\right\}$, all agents in parallel}
    \STATE  Evaluate the prediction $\hat{y}_i$ for all labeled data.
    %\STATE  Evaluate $\nabla_{\mathbf{w}_k} L(\{\mathbf{w}_z^t\}_{{z \in \overline{\mathcal{C}}_k}})$.
    \STATE  Compute the local estimate $\boldsymbol{\psi}_k^{t}$ using \eqref{eq:dist_gd}.
    \STATE  $\mathbf{w}^{t+1}_k = \sum_{z \in \mathcal{N}_k} C_{kz}\boldsymbol{\psi}_z^t$
\ENDFOR
\end{algorithmic}
\end{algorithm}

\textbf{A note on privacy}: In a distributed optimization context, it is generally required that the algorithm is privacy-preserving, i.e., no input data is ever transparently exposed over the network. While the consensus step in \eqref{eq:consensus_step} trivially ensures this property, this is slightly more critical for the inference of the first layer, in which the agents exchange linear combinations of the data points. While the matrices $\mathbf{W}_k$ are not necessarily invertible, this could still pose a non-negligible privacy threat whenever the two agents share a large number of connections. We leave an investigation of differentially-private \cite{abadi2016deep} versions of this algorithm for a future work.

\subsection{Convergence Analysis}
\label{eq:convergence_analysis}

In  this  section,  we  illustrate  the  convergence of the proposed distributed training algorithm to stationary solutions of \eqref{eq:opt}.

\begin{proposition} A point $\overline{\mathbf{w}}^*\in \mathcal{S}$ is a stationary solution of Problem (\ref{eq:opt_dist}) if a gradient $\nabla L(\overline{\mathbf{w}}^*)$ exists such that \footnote{To ease the notation, from now on we will use $\nabla L$ instead of $\nabla_{\overline{\mathbf{w}}} L$.}
\begin{align}\label{stat_point}
\mathbf{\Pi}_{\mathcal{S}} \nabla L(\overline{\mathbf{w}}^*)= \mathbf{0}. \smallskip
\end{align}
\end{proposition}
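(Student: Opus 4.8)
The plan is to derive \eqref{stat_point} from the standard first-order stationarity (necessary optimality) condition for the constrained problem \eqref{eq:opt_dist}, specialized to the fact that the feasible set $\mathcal{S}$ is a linear subspace whose orthogonal projector is the explicit linear operator $\mathbf{\Pi}_{\mathcal{S}}$ in \eqref{Oth_proj_consensus}. Concretely, I would adopt the variational-inequality notion of stationarity: a feasible $\overline{\mathbf{w}}^*\in\mathcal{S}$ (at which $\nabla L$ exists) is stationary for \eqref{eq:opt_dist} if $\langle \nabla L(\overline{\mathbf{w}}^*),\, \overline{\mathbf{w}}-\overline{\mathbf{w}}^*\rangle \ge 0$ for every $\overline{\mathbf{w}}\in\mathcal{S}$. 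This is the appropriate notion here because $L$ is in general nonconvex, and no constraint qualification is needed since $\mathcal{S}$ is affine (indeed linear).

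First I would observe that, since $\mathcal{S}$ is a subspace and $\overline{\mathbf{w}}^*\in\mathcal{S}$, the set of feasible directions $\{\overline{\mathbf{w}}-\overline{\mathbf{w}}^*:\overline{\mathbf{w}}\in\mathcal{S}\}$ equals $\mathcal{S}$ itself, and it is symmetric: $\mathbf{d}\in\mathcal{S}\Rightarrow -\mathbf{d}\in\mathcal{S}$. Hence the variational inequality, applied to $\mathbf{d}$ and to $-\mathbf{d}$, forces $\langle\nabla L(\overline{\mathbf{w}}^*),\mathbf{d}\rangle=0$ for all $\mathbf{d}\in\mathcal{S}$, i.e. $\nabla L(\overline{\mathbf{w}}^*)$ is orthogonal to $\mathcal{S}$. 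Since $\mathbf{\Pi}_{\mathcal{S}}$ in \eqref{Oth_proj_consensus} is symmetric and idempotent, hence the orthogonal projector onto $\mathcal{S}$, orthogonality to $\mathcal{S}$ is equivalent to $\mathbf{\Pi}_{\mathcal{S}}\nabla L(\overline{\mathbf{w}}^*)=\mathbf{0}$, which is \eqref{stat_point}. The converse is the same chain read backwards: if $\mathbf{\Pi}_{\mathcal{S}}\nabla L(\overline{\mathbf{w}}^*)=\mathbf{0}$, then for any $\mathbf{d}\in\mathcal{S}$ we have $\langle\nabla L(\overline{\mathbf{w}}^*),\mathbf{d}\rangle=\langle\nabla L(\overline{\mathbf{w}}^*),\mathbf{\Pi}_{\mathcal{S}}\mathbf{d}\rangle=\langle\mathbf{\Pi}_{\mathcal{S}}\nabla L(\overline{\mathbf{w}}^*),\mathbf{d}\rangle=0$, so the variational inequality holds.

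An equivalent and slightly more ``algorithmic'' route, which I would also mention because it ties directly to recursion \eqref{Cent_gradient_proj}, is the fixed-point characterization: $\overline{\mathbf{w}}^*$ is stationary for \eqref{eq:opt_dist} iff $\overline{\mathbf{w}}^*=\mathbf{\Pi}_{\mathcal{S}}\big(\overline{\mathbf{w}}^*-\eta\nabla L(\overline{\mathbf{w}}^*)\big)$ for some (equivalently, every) $\eta>0$; using $\mathbf{\Pi}_{\mathcal{S}}\overline{\mathbf{w}}^*=\overline{\mathbf{w}}^*$ (valid because $\overline{\mathbf{w}}^*\in\mathcal{S}$) and the linearity of $\mathbf{\Pi}_{\mathcal{S}}$, this collapses to $\overline{\mathbf{w}}^*=\overline{\mathbf{w}}^*-\eta\,\mathbf{\Pi}_{\mathcal{S}}\nabla L(\overline{\mathbf{w}}^*)$, i.e. \eqref{stat_point}. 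I do not anticipate a genuine obstacle here: the result is just the specialization of the textbook stationarity condition for minimization over a closed convex set to the case of a subspace with a known projector. The only points deserving care are stating precisely which notion of stationary point is adopted (made delicate only by the nonconvexity of $L$) and recording that the variational-inequality and fixed-point formulations coincide in this setting because $\mathcal{S}$ is linear.
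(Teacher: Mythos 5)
Your proposal is correct, and your second, ``algorithmic'' route is essentially the paper's own proof: the paper invokes the minimum principle to write the fixed-point condition $\overline{\mathbf{w}}^*=\mathbf{\Pi}_{\mathcal{S}}\big(\overline{\mathbf{w}}^*-\nabla L(\overline{\mathbf{w}}^*)\big)$ and then uses $\mathbf{\Pi}_{\mathcal{S}}\overline{\mathbf{w}}^*=\overline{\mathbf{w}}^*$ together with linearity of $\mathbf{\Pi}_{\mathcal{S}}$ to collapse it to $\mathbf{\Pi}_{\mathcal{S}}\nabla L(\overline{\mathbf{w}}^*)=\mathbf{0}$, which is exactly your two-line derivation. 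Your primary route is mildly different in presentation: you start from the variational-inequality notion of stationarity over the convex set $\mathcal{S}$ and exploit that $\mathcal{S}$ is a linear subspace (so feasible directions are symmetric) to force orthogonality of the gradient to $\mathcal{S}$, then translate orthogonality into the projector condition via symmetry and idempotence of $\mathbf{\Pi}_{\mathcal{S}}$. The two characterizations are standard equivalents for minimization over a closed convex set, so nothing substantive separates them; what your version adds is (i) an explicit statement of which stationarity notion is used (the paper leaves ``minimum principle'' implicit) and (ii) the converse implication, i.e., that $\mathbf{\Pi}_{\mathcal{S}}\nabla L(\overline{\mathbf{w}}^*)=\mathbf{0}$ with $\overline{\mathbf{w}}^*\in\mathcal{S}$ indeed gives a stationary point, which the paper does not spell out but which matches the ``if'' phrasing of the proposition. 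No gaps.
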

\begin{proof}
From the minimum principle, any stationary solution $\overline{\mathbf{w}}^*$ of (\ref{eq:opt_dist}) must satisfy
\begin{align}\label{min_princ}
\overline{\mathbf{w}}^*=\mathbf{\Pi}_{\mathcal{S}}\big(\overline{\mathbf{w}}^*-\nabla L(\overline{\mathbf{w}}^*)\big).
\end{align}
Since $\mathbf{\Pi}_{\mathcal{S}}\overline{\mathbf{w}}^*=\overline{\mathbf{w}}^*$, (\ref{min_princ}) leads immediately to (\ref{stat_point}). \smallskip
\end{proof}

We consider the following assumptions on function $L(\overline{\mathbf{w}})=\sum_{i \in \mathcal{T}_D} l(\overline{\mathbf{w}};\hat{y}_i, y_i)$ in problem (\ref{eq:opt_dist}).\smallskip

\noindent {\textit{Assumption B [On the objective function]}}: The objective function $L$ in (\ref{eq:opt_dist}) is continuous and satisfies:
\begin{description}
  \item[(B1)] $\,L$ is a differentiable, nonconvex function, with Lipschitz continuous gradient, i.e.,
\begin{align}
\|\nabla L(\mathbf{w})-\nabla L(\mathbf{z})\|\leq c_L\|\mathbf{w}-\mathbf{z}\|,\;\; \hbox{for all $\mathbf{w}$, $\mathbf{z}$}; \nonumber
\end{align}
  \item[(B2)] $L$ has bounded gradients, i.e., there exists $G>0$ such that $\|\nabla L(\mathbf{w})\|\leq G$ for all $\mathbf{w}$;
  \item[(B3)] $L$ is coercive, i.e., $\displaystyle \lim_{\|\mathbf{w}\|\rightarrow\infty} L(\mathbf{w})=+\infty.$
\end{description}

Assumption B is standard and satisfied by many practical problems, \change{see, e.g., \cite{nedic2009distributed,bianchi2012convergence,di2016next,george2019distributed}.}
In particular, assumption  (B3),  together  with  the  continuity  of $L$, guarantees the existence of a global minimizer of problem (\ref{eq:opt_dist}). Also, we consider two alternative choices for the step-size sequence $\{\eta^t\}_t$ in Algorithm 1, which are illustrated in the following assumption.\vspace{.1cm}

\noindent {\textit{Assumption C [On the step-size]}}: The step-size sequence $\{\eta^t\}_t$ is chosen as:
\begin{description}
  \item[(C1)] a constant, i.e., $\eta^t=\eta>0$ for all $t$;
  \item[(C2)] a diminishing sequence such that $\eta^t>0$, for all $t$,
\begin{equation}\label{step-size}
   \hbox{ $\displaystyle\sum_{t=0}^{\infty}\eta^t=\infty$ \hspace{.2cm} and \hspace{.2cm} $\displaystyle\sum_{t=0}^{\infty}(\eta^t)^2<\infty.$} \nonumber
\end{equation}
\end{description}
\change{Now, letting $\overline{\mathbf{w}}_{\mathcal{S}}^t=\mathbf{\Pi}_{\mathcal{S}}\overline{\mathbf{w}}^t$ be the orthogonal projection of the sequence $\{\overline{\mathbf{w}}^t\}_t$ onto $\mathcal{S}$}, we define the performance metric 
\change{\begin{equation}\label{g_nonconvex}
g^t=\|\nabla L(\overline{\mathbf{w}}_{\mathcal{S}}^t)\|^2_{\mathbf{\Pi}_{\mathcal{S}}}=\nabla L(\overline{\mathbf{w}}_{\mathcal{S}}^t)^T\mathbf{\Pi}_{\mathcal{S}}\nabla L(\overline{\mathbf{w}}_{\mathcal{S}}^t),
\end{equation}}
which quantifies proximity to a stationary solution of (\ref{eq:opt_dist}) [cf. (\ref{stat_point})]. 
Also, let  
\begin{equation}\label{g_best}
g_{best}^t= \inf_{n=1,\ldots,t} g^n = \inf_{n=1,\ldots,t}\|\nabla  L(\overline{\mathbf{w}}_{\mathcal{S}}^n)\|^2_{\mathbf{\Pi}_{\mathcal{S}}}.
\end{equation}

We now illustrate the convergence properties of the proposed distributed training strategy, which are summarized in the following Theorem.

\begin{theorem}\label{convergence_th}
Let $\{\overline{\mathbf{w}}^t\}_t$ be the sequence generated by Algorithm 1, and let $\overline{\mathbf{w}}_{\mathcal{S}}^t=\mathbf{\Pi}_{\mathcal{S}}\overline{\mathbf{w}}^t$ be its orthogonal projection onto $\mathcal{S}$. Suppose that conditions (A1)-(A3) and (B1)-(B3) hold. Then, the following results hold.
\begin{description}
  \item[(a)] \emph{\texttt{[Consensus]}}: Under (C1), we have:
\begin{equation}\label{sub_lim1}
\lim_{t\rightarrow\infty}\;\|\overline{\mathbf{w}}^t-\overline{\mathbf{w}}_{\mathcal{S}}^t \|= O(\eta);
\end{equation}
if (C2) holds, the sequence $\{\overline{\mathbf{w}}^t\}_t$  asymptotically converges to the consensus subspace $\mathcal{S}$, i.e.,
\begin{equation}\label{sub_lim2}
\lim_{t\rightarrow\infty}\;\|\overline{\mathbf{w}}^t-\overline{\mathbf{w}}_{\mathcal{S}}^t\|= 0;
\end{equation}

\item[(b)] \emph{\texttt{[Convergence]}}: Under (C1), we obtain:

\begin{equation}\label{conv_nonconvex1}
\lim_{t\rightarrow\infty}\; g_{best}^t= \mathcal{O}(\eta),
\end{equation}
with convergence rate $\mathcal{O}\left(\frac{1}{t+1}\right)$; finally, if (B2) holds, we have:
\begin{equation}\label{conv_nonconvex2}
\lim_{t\rightarrow\infty}\; g^t=0,
\end{equation}
i.e.,  $\{\overline{\mathbf{w}}_{\mathcal{S}}^t\}_t$  converges to a stationary solution of (\ref{eq:opt_dist}). \vspace{.1cm}
\end{description}
\begin{proof}
See Appendix B. 
\end{proof}

%and $\{\overline{\bx}[k]\}_k\triangleq \{\mathcal{P}_{\mathcal{R}(\mathbf{U})}\bx[k]\}_k$ be its projection onto the subspace $\mathcal{R}(\mathbf{U})$; 
%let also  $\{\bx_\perp[k]\}_k=\{\bx[k]-\overline{\bx}[k]\}_k$. Suppose that conditions (A2), (A3) and (C1)-(C3) hold. Then, the following results hold.

\end{theorem}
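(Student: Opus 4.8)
The plan is to split the dynamics of \eqref{Dist_gradient_proj} into its projection onto the consensus subspace $\mathcal{S}$ and its projection onto $\mathcal{S}^{\perp}$, and to analyze the two in tandem. Everything rests on two consequences of Assumption~A. First, since $\mathbf{C}\mathbf{1}_m=\mathbf{1}_m$ and $\mathbf{1}_m^T\mathbf{C}=\mathbf{1}_m^T$, we have $\mathbf{\Pi}_{\mathcal{S}}(\mathbf{C}\otimes\mathbf{I}_p)=(\mathbf{C}\otimes\mathbf{I}_p)\mathbf{\Pi}_{\mathcal{S}}=\mathbf{\Pi}_{\mathcal{S}}$, so applying $\mathbf{\Pi}_{\mathcal{S}}$ to \eqref{Dist_gradient_proj} gives the reduced recursion $\overline{\mathbf{w}}_{\mathcal{S}}^{t+1}=\overline{\mathbf{w}}_{\mathcal{S}}^{t}-\eta^t\mathbf{\Pi}_{\mathcal{S}}\nabla L(\overline{\mathbf{w}}^t)$. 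Second, $(\mathbf{I}-\mathbf{\Pi}_{\mathcal{S}})(\mathbf{C}\otimes\mathbf{I}_p)=\big(\mathbf{C}-\tfrac{1}{m}\mathbf{1}_m\mathbf{1}_m^T\big)\otimes\mathbf{I}_p$, whose operator norm equals $\rho\big(\mathbf{C}-\tfrac1m\mathbf{1}_m\mathbf{1}_m^T\big)\le 1-\gamma$ by symmetry of $\mathbf{C}$ and (A2).

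For part (a), set $\mathbf{e}^t=\overline{\mathbf{w}}^t-\overline{\mathbf{w}}_{\mathcal{S}}^t$. Applying $\mathbf{I}-\mathbf{\Pi}_{\mathcal{S}}$ to \eqref{Dist_gradient_proj}, using the second identity together with $\big(\mathbf{C}-\tfrac1m\mathbf{1}_m\mathbf{1}_m^T\big)\mathbf{1}_m=\mathbf{0}$, gives $\mathbf{e}^{t+1}=\big[(\mathbf{C}-\tfrac1m\mathbf{1}_m\mathbf{1}_m^T)\otimes\mathbf{I}_p\big]\big(\mathbf{e}^t-\eta^t\nabla L(\overline{\mathbf{w}}^t)\big)$, and therefore, invoking the bounded-gradient assumption (B2),
\[
\|\mathbf{e}^{t+1}\|\le(1-\gamma)\|\mathbf{e}^t\|+(1-\gamma)G\,\eta^t .
\]
Iterating this scalar inequality proves (a): under (C1) the geometric series yields $\limsup_t\|\mathbf{e}^t\|\le(1-\gamma)G\eta/\gamma=O(\eta)$; under (C2), since $\eta^t\to0$ (implied by $\sum_t(\eta^t)^2<\infty$), the convolution of the exponentially decaying kernel with the null sequence $\{\eta^t\}$ vanishes, so $\|\mathbf{e}^t\|\to0$.

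For part (b), apply the descent lemma (available from the Lipschitz gradient (B1)) to the reduced recursion, then split $\nabla L(\overline{\mathbf{w}}^t)=\nabla L(\overline{\mathbf{w}}_{\mathcal{S}}^t)+\big(\nabla L(\overline{\mathbf{w}}^t)-\nabla L(\overline{\mathbf{w}}_{\mathcal{S}}^t)\big)$, bound the second piece by $c_L\|\mathbf{e}^t\|$, and use (B2) for $\|\mathbf{\Pi}_{\mathcal{S}}\nabla L\|\le G$. This yields the one-step estimate
\[
\eta^t g^t\le L(\overline{\mathbf{w}}_{\mathcal{S}}^t)-L(\overline{\mathbf{w}}_{\mathcal{S}}^{t+1})+c_L G\,\eta^t\|\mathbf{e}^t\|+\tfrac{c_L G^2}{2}(\eta^t)^2 .
\]
Summing over $t=0,\dots,T$ and using that (B3), with continuity of $L$, makes $L$ bounded below, gives $\sum_{t=0}^{T}\eta^t g^t\le\Delta_0+c_LG\sum_{t=0}^{T}\eta^t\|\mathbf{e}^t\|+\tfrac{c_LG^2}{2}\sum_{t=0}^{T}(\eta^t)^2$, with $\Delta_0=L(\overline{\mathbf{w}}_{\mathcal{S}}^0)-\inf L$. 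Under (C1), substituting $\sum_{t=0}^{T}\|\mathbf{e}^t\|=O(1)+O(\eta T)$ from part (a) and dividing by $\eta(T+1)$ gives $g_{best}^T\le\frac{1}{T+1}\sum_{t=0}^{T}g^t\le O\!\big(\tfrac{1}{\eta(T+1)}\big)+O(\eta)$, i.e. the $O(\eta)$ limit attained at rate $O(1/(t+1))$. Under the diminishing rule (C2), I first show $\sum_t\eta^t\|\mathbf{e}^t\|<\infty$: writing $\|\mathbf{e}^t\|$ as the geometric convolution of $\{\eta^s\}$ from part (a) and applying $\eta^t\eta^s\le\tfrac12\big((\eta^t)^2+(\eta^s)^2\big)$ reduces summability to $\sum_t(\eta^t)^2<\infty$. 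Hence $\sum_t\eta^t g^t<\infty$, which with $\sum_t\eta^t=\infty$ forces $\liminf_t g^t=0$; upgrading to $\lim_t g^t=0$ uses $|g^{t+1}-g^t|\le2Gc_L\|\overline{\mathbf{w}}_{\mathcal{S}}^{t+1}-\overline{\mathbf{w}}_{\mathcal{S}}^t\|\le2G^2c_L\eta^t$ (again from (B1)--(B2)) together with the standard lemma that a nonnegative sequence with $\sum_t\eta^t g^t<\infty$, $\sum_t\eta^t=\infty$ and $|g^{t+1}-g^t|=O(\eta^t)$ must tend to zero.

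I expect the main obstacles to be the two features that separate this analysis from textbook distributed (sub)gradient descent: the mixed term $\sum_t\eta^t\|\mathbf{e}^t\|$, which couples the consensus error with the optimization progress and must be shown summable through the convolution/AM--GM argument, and the passage from $\liminf_t g^t=0$ to $\lim_t g^t=0$, which requires the ``slowly varying'' estimate on $g^t$ and the accompanying subsequence lemma. It is worth stressing that none of the above uses separability of $L$ across agents --- the whole argument runs on the aggregate variable $\overline{\mathbf{w}}^t$ --- so the data-graph coupling in $L$ is immaterial for convergence and matters only (as in the discussion around \eqref{local_gradient}) for the distributed computability of $\nabla L$.
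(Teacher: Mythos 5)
Your proof is correct and follows essentially the same route as the paper's Appendix B: project the recursion \eqref{Dist_gradient_proj} onto $\mathcal{S}$ and $\mathcal{S}^\perp$, contract the disagreement term geometrically via (A1)--(A2) and (B2), then apply the descent lemma to the consensus component with the disagreement entering as a perturbation, sum, and pass from $\liminf_t g^t=0$ to $\lim_t g^t=0$ by a slowly-varying argument. The only differences are presentational: you establish the needed scalar-sequence facts directly (geometric convolution of a null sequence, AM--GM for the cross term $\sum_t\eta^t\|\mathbf{e}^t\|$) where the paper invokes its Lemma \ref{Lemma_sequences}, and you sketch the $\liminf$-to-$\lim$ upgrade that the paper defers to a citation.
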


\change{
\subsection{Extension to order-$P$ GCN}
\label{subsec:extension_to_order_p}
The GCNs we considered up to now can be interpreted as a sequence of order-1 filters, interleaved with pointwise non-linearities. More generic order-$P$ GC layers generalize this setup, and they can be shown to outperform their order-1 counterparts \cite{shchur2018pitfalls,dwivedi2020benchmarking,isufi2020edgenets}. Extending \eqref{eq:gc_layer}, an order-$P$ GC layer can be written as \cite{isufi2020edgenets}:
\begin{equation}
    \mathbf{H} = \phi\left( \sum_{p=0}^P \mathbf{D}^p\mathbf{X}\mathbf{W}_{p} \right) \,,
    \label{eq:order_p_gc_layer}
\end{equation}
where $P$ is a hyper-parameter, and $\mathbf{W}_0, \ldots, \mathbf{W}_P$ are separate trainable parameters. While the $i$th output in \eqref{eq:order_p_gc_layer} depends on neighbors up to $P$ hops away, the layer is still implementable as $P$ iterative rounds of message passing across the graph. As a result, it is immediate to extend our setup to include them. Consider for example a distributed version of an order-2 GC layer, where agent $z$ is endowed with three local sets of parameters $\mathbf{W}_{p,z}$, $p=0,1,2$. Define for simplicity $\tilde{\mathbf{x}}_{p,i} = \mathbf{W}_{p,a(i)}^T\mathbf{x}_i$. Then, the distributed layer can be written as:
\begin{eqnarray}
    \mathbf{h}_i & = \phi\left( \tilde{\mathbf{x}}_{0,i} + \displaystyle\sum_{j \in \mathcal{N}_i} D_{ij}
     \Bigg[ \tilde{\mathbf{x}}_{1,j} + \displaystyle\sum_{l \in \mathcal{N}_j} D_{jl} \tilde{\mathbf{x}}_{2,l} \Bigg] \right) \,,
    \label{eq:order_2_gc_layer_dist}
\end{eqnarray}
where $\tilde{\mathbf{x}}_{1,j}$ and $\tilde{\mathbf{x}}_{2,l}$ are the messages to be exchanged across the graph. Similarly, we can concatenate multiple layers of the form \eqref{eq:order_2_gc_layer_dist} to obtain distributed order-2 GCNs. In this case, the $i$th output of $L$ order-$P$ layers will depend on the set $\mathcal{C}_i^{PL}$.
}

\section{Matching data and communication graphs}
\label{sec:optimizing_consensus_matrix}

To implement Algorithm 1, the communication network between the agents (described by $\mathbf{C}$) must satisfy three constraints: (i) it must be connected (even when the underlying data graph is disconnected) [cf. (A2)]; (b) the matrix $\mathbf{C}$ must be symmetric and doubly stochastic [cf. (A1)]; and (c) the graph must allow communication between two agents whenever they share at least one edge in the data graph, in order to enable the proposed distributed inference and training.

Recall from Section \ref{sec:distributed_training_for_gcns} that we denote with $B_{kz}$ the number of data edges in common between agents $k$ and $z$. Then, let us denote by $\mathbf{A} \in \left\{0,1\right\}^{m \times m}$ a Boolean matrix such that:
\begin{equation}
    A_{kz} = \begin{cases}
    1 & \text{ if } B_{kz} = 0; \\
    0 & \text{ otherwise, and if $k=z$. }
    \end{cases}
\end{equation}
Thus, the set of all feasible matrices $\mathbf{C}$ is described by:
\begin{align}
    \mathcal{X} = \Bigg\{ &\mathbf{C} \;\vert\; \mathbf{C}\mathbf{1} = \mathbf{1}, \, \mathbf{C}^T = \mathbf{C}, \,  \rho\left(\mathbf{C}-\displaystyle\frac{1}{m}\mathbf{1}_m\mathbf{1}_m^T\right)<1,  \nonumber\\ 
    &\hspace{.5cm} C_{kz} \neq 0 \text{ if } A_{kz} = 0 \Bigg\} \,.
    \label{eq:C_opt_set}
\end{align}
We can identify two extremes in the set $\mathcal{C}$. On one hand, we can connect only agents that share at least one connection between data points (the sparsest possible connectivity). On the other hand, we can consider fully-connected networks, where every pair of agents is connected. Of course, connectivity has also an impact on the convergence speed of the proposed distributed procedure. Thus, in general, we would like a principled way to optimize the communication topology given the constraints on the data graph and a target convergence speed of the consensus mechanism in (\ref{eq:consensus_step}). To this aim, we propose the following optimization problem to design the matrix $\mathbf{C}$ (and, consequently, the communication topology of the agents' network):
\begin{align}\label{eq:C_opt}
    \underset{\mathbf{C}}{\arg\min} & \;\;\; \lVert \mathbf{C} \odot \mathbf{A} \rVert_1  \nonumber\\
    \text{s.t.} & \;\;\; \mathbf{C} = \mathbf{C}^T,\;\;\; \mathbf{C}\mathbf{1} = \mathbf{1} \,, \\
                & \;\;\; \rho\left( \mathbf{C} - \frac{1}{m}\mathbf{1}\mathbf{1}^T \right) \le 1 - \gamma \,\nonumber 
\end{align}
where $\odot$ denotes the Hadamard product, $\lVert \cdot \rVert_1$ is the $\ell_1$ norm on the elements of the matrix, and $\gamma$ is a hyper-parameter to be chosen. The rationale underlying (\ref{eq:C_opt}) is to impose sparsity on the links that are not necessary to train the distributed GCN model (i.e., those such that $A_{kz}=1$ for all $k\neq z$), while enforcing conditions (A1) and (A2) on matrix $\mathbf{C}$. In particular, the last constraint in (\ref{eq:C_opt}) ensures the network connectivity, and controls the convergence speed of the consensus step in (\ref{eq:consensus_step}). In particular, lower values of $1-\gamma$ lead to larger graph connectivity and, consequently, to faster diffusion of information over the network. Thus, solving (\ref{eq:C_opt}) while varying $\gamma$ enables us to explore multiple feasible connectivities between the two extremes discussed above, while guaranteeing convergence of the proposed algorithm for distributed GCN training. Note that this problem must be solved only once, before deploying the distributed GCN model.

Problem (\ref{eq:C_opt}) is convex, and can be solved using standard numerical tools based on semi-definite programming (SDP) \cite{boyd2004convex}. However, it is known that such methods might become extremely inefficient if the number of variables (in this case, the number $m$ of agents) becomes too large, e.g., of the order of hundreds of agents. Thus, in the sequel, we propose an efficient iterative method based on the alternating direction method of multipliers (ADMM) \cite{boyd2011distributed} to solve (\ref{eq:C_opt}).

\subsection*{ADMM solution}

Let us denote the feasible set of (\ref{eq:C_opt}) as:
\begin{align}
    \widetilde{\mathcal{X}} = \Bigg\{ &\mathbf{C} \;\vert\; \mathbf{C}\mathbf{1} = \mathbf{1}, \, \mathbf{C}^T = \mathbf{C}, \,  \rho\left(\mathbf{C}-\displaystyle\frac{1}{m}\mathbf{1}_m\mathbf{1}_m^T\right)\leq 1-\gamma\Bigg\} \,. \nonumber
\end{align}
Then, let us consider the equivalent problem formulation of (\ref{eq:C_opt}), which writes as:
\begin{align}
    \underset{\mathbf{C}, \mathbf{Z}}{\arg\min} & \;\;\; \lVert \mathbf{Z} \odot \mathbf{A} \rVert_1 + \mathbb{I}_{\widetilde{\mathcal{X}} }(\mathbf{C}) \,, \label{eq:C_opt_relaxed} \\
    \hspace{.6cm}\text{s.t.} \;\;\; & \mathbf{C} = \mathbf{Z}   \nonumber\,
\end{align}
where $\mathbb{I}_{\widetilde{\mathcal{X}}}(\cdot)$ is the indicator function with respect to the feasible constraint set $\widetilde{\mathcal{X}}$. Now, we exploit the scaled version of the ADMM algorithm \cite{boyd2011distributed}, which alternates among the following three optimization steps. First, letting $\mathbf{U}^t$ be the estimate at time $t$ of the (scaled) Lagrange multiplier associated with the equality constraint in (\ref{eq:C_opt_relaxed}), we solve a regularized problem with respect to $\mathbf{C}$, given by:
\begin{align}
    \mathbf{C}^{t+1} &= \underset{\mathbf{C}}{\arg\min}  \left\{\mathbb{I}_{\widetilde{\mathcal{X}}}(\mathbf{C}) + \frac{1}{2}\lVert \mathbf{C} - \mathbf{Z}^t + \mathbf{U}^t \rVert_F^2 \right\} \,.
    \label{eq:admm_1}
\end{align}
The step in (\ref{eq:admm_1}) can be computed in closed-form as:
\begin{equation}
    \mathbf{C}^{t+1} = \mathcal{P}_{\widetilde{\mathcal{X}}}\Big( \mathbf{Z}^t - \mathbf{U}^t \Big) \,,
    \label{eq:admm_1_solved}
\end{equation}
where $\mathcal{P}_{\widetilde{\mathcal{X}}}(\cdot)$ denotes the orthogonal projection onto the feasible set $\widetilde{\mathcal{X}}$.
Following similar arguments as in \cite{nassif2020adaptation}, letting $\mathbf{\Pi}_{\mathbf{1}}=\mathbf{1}_m\mathbf{1}_m^T/m$, the projection of a generic matrix $\mathbf{X}$ onto the set $\widetilde{\mathcal{X}}$ is given by:
\begin{equation}\label{eq:projection}
    \mathcal{P}_{\widetilde{\mathcal{X}}}(\mathbf{X}) = \mathbf{\Pi}_{\mathbf{1}} + \sum_{i=1}^m \beta_i \mathbf{v}_i\mathbf{v}_i^T \,,
\end{equation}
where 
\begin{equation}\label{eq:eigenvalues}
    \beta_i = \begin{cases}
        -1+\gamma, & \text{ if } \lambda_i < -1+\gamma; \\
        1-\gamma, & \text{ if } \lambda_i > 1-\gamma; \\
        \lambda_i, & \text{ if } |\lambda_i|\leq 1-\gamma.
    \end{cases}
\end{equation}
and $\left\{\lambda_i, \mathbf{v}_i\right\}$ denote the eigenvalues and eigenvectors of matrix
\begin{equation}\label{eq:deflated_matrix}
 \left(\mathbf{I} - \mathbf{\Pi}_{\mathbf{1}}\right)\left(\frac{\mathbf{X} + \mathbf{X}^T}{2}\right) \left(\mathbf{I} - \mathbf{\Pi}_{\mathbf{1}}\right) \,.
\end{equation}

\begin{algorithm}[t]
\caption{\textbf{: ADMM-based solution for (\ref{eq:C_opt})}}
\vspace{.1cm}
\textbf{Data:} matrix $\mathbf{A}$; $\mathbf{C}^{0}$, $\mathbf{U}^{0}$, and $\mathbf{Z}^{0}$ chosen at random; $\varrho>0$. Then, for each time $t\geq0$, repeat the following steps:
\begin{align}
    &\hbox{\textbf{(S.1)}}\quad\mathbf{C}^{t+1} = \mathcal{P}_{\widetilde{\mathcal{X}}}\Big( \mathbf{Z}^t - \mathbf{U}^t \Big) \nonumber\\
    &\hspace{1cm}\hbox{where the projector   $P_{\widetilde{\mathcal{X}}}(\cdot)$ is given by (\ref{eq:projection})-(\ref{eq:deflated_matrix});}    \nonumber\\[3pt]
    &\hbox{\textbf{(S.2)}}\quad\mathbf{Z}^{t+1} \,=\, \mathcal{S}_{\frac{1}{\varrho}}\Big( \left(\mathbf{C}^{t+1} + \mathbf{U}^t\right) \odot \mathbf{A} \Big)\nonumber\\
    &\hspace{2.2cm}+ \left(\mathbf{C}^{t+1} + \mathbf{U}^t\right) \odot \left(\mathbf{1}_{m\times m} - \mathbf{A}\right)  \nonumber\\[5pt]
    &\hbox{\textbf{(S.3)}}\quad\mathbf{U}^{t+1} = \mathbf{U}^t + \mathbf{C}^{t+1} - \mathbf{Z}^{t+1}  \nonumber
\end{align}
\label{algo:admm}
\end{algorithm}

The second step of scaled ADMM performs the following optimization with respect to the primal variable $\mathbf{Z}$:
\begin{equation}
    \mathbf{Z}^{t+1} = \underset{\mathbf{Z}}{\arg\min} \left\{ \lVert \mathbf{Z} \odot \mathbf{A} \rVert_1 + \frac{\varrho}{2}\lVert \mathbf{C}^{t+1} - \mathbf{Z} + \mathbf{U}^t \rVert_F^2 \right\} \,,
    \label{eq:admm_2}
\end{equation}
where $\varrho>0$ is the penalty parameter of the augmented Lagrangian in ADMM. By setting the subgradient of the objective in (\ref{eq:admm_2}) to zero, it is straightforward to see that the solution is given by:
\begin{align}
    \mathbf{Z}^{t+1} \,=\, &  \mathcal{S}_{1/\varrho}\Big( \left(\mathbf{C}^{t+1} + \mathbf{U}^t\right) \odot \mathbf{A} \Big)\nonumber\\
    &+ \left(\mathbf{C}^{t+1} + \mathbf{U}^t\right) \odot \left(\mathbf{1}_{m\times m} - \mathbf{A}\right) \,,
    \label{eq:admm_2_solved}
\end{align}
where $\mathcal{S}_{\epsilon}(x)= {\rm sign}(x)\cdot\max (|x|-\epsilon,0)$ is the (element-wise) soft-thresholding operator with threshold parameter $\epsilon$. Finally, we perform the multipliers update of scaled ADMM:
\begin{equation}
    \mathbf{U}^{t+1} = \mathbf{U}^t +  \mathbf{C}^{t+1} - \mathbf{Z}^{t+1} \,.
    \label{eq:admm_3}
\end{equation}
The main steps of the proposed ADMM solution are summarized in Algorithm 2. The computational complexity of Algorithm 2 is determined by the projection step in (S.1), which requires $O(m^3)$ operations.

As a numerical example, we show in Fig. \ref{fig:w_opti} the sparsity of the matrix $\mathbf{C}$ (defined as the percentage of zero entries) with respect to $1-\gamma$, when solving the optimization problem (\ref{eq:C_opt}) for $m=20$ agents and randomly initialized matrix $\mathbf{A}$. From Fig. \ref{fig:w_opti}, we can notice how the sparsity of the communication graph improves by increasing the value of the hyper-parameter $1-\gamma$, until it reaches the minimally-connected network corresponding to the perfect matching with the data graph (i.e., $C_{kz} > 0$ i.f.f. $A_{kz} =0$).

\begin{figure}
	\centering
	\includegraphics[width=0.45\textwidth]{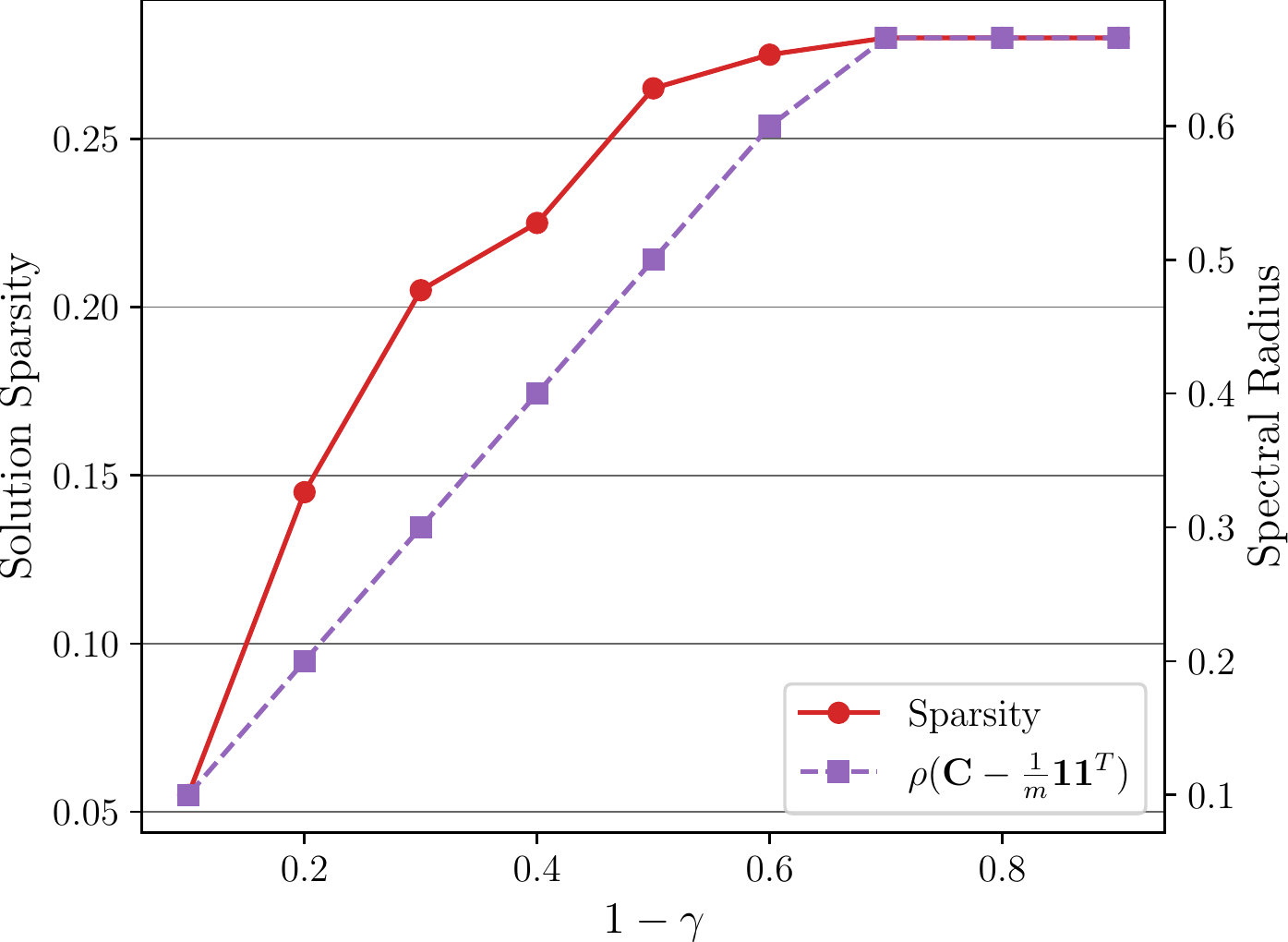}
	\caption{Sparsity of the final solution (percentage of zero entries) when solving the optimization problem for a varying $\gamma$.}
	\label{fig:w_opti}
\end{figure}

\section{Experimental evaluation}
\label{sec:experimental_evaluation}

In this section, we extensively assess the performance of the proposed strategy for distributed training (and inference) of graph convolutional networks.

\subsection{Experimental setup}
\label{subsec:experimental_setup}

\begin{figure*}[t]
\subfloat[CORA (loss)]{
\includegraphics[width=0.63\columnwidth,keepaspectratio]{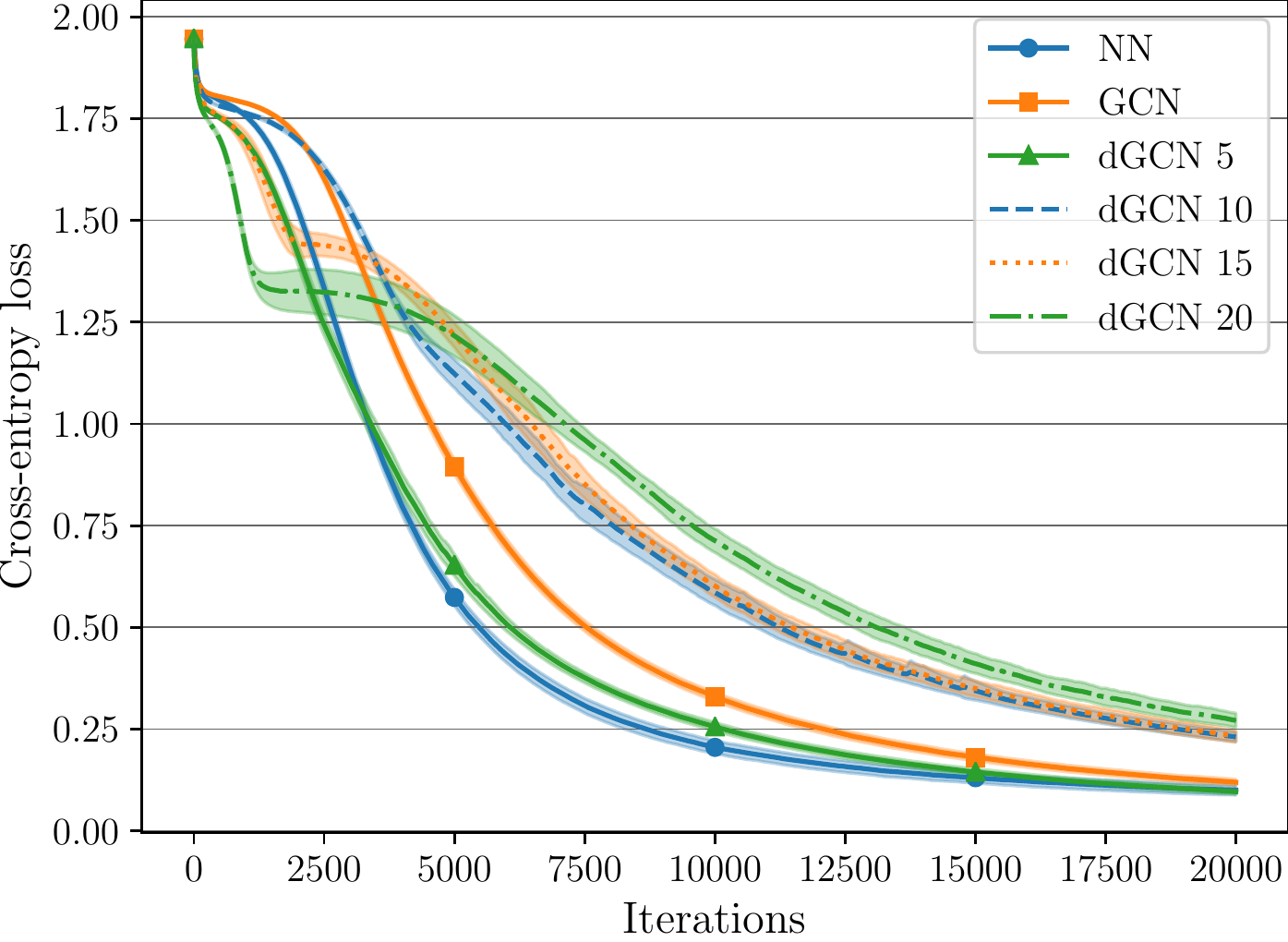}
}
\subfloat[CITESEER (loss)]{
\includegraphics[width=0.63\columnwidth,keepaspectratio]{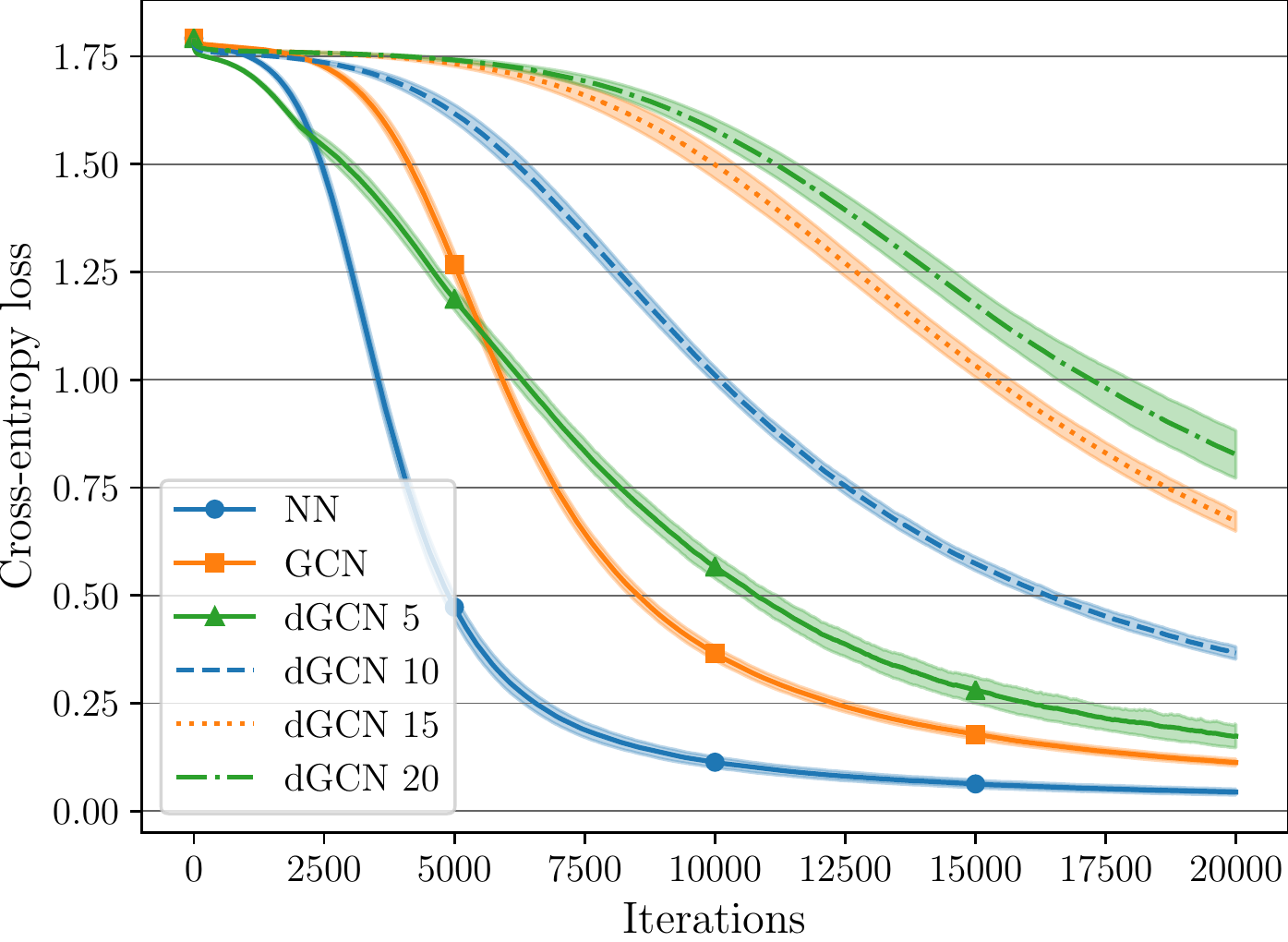}
}
\subfloat[PUBMED (loss)]{
\includegraphics[width=0.63\columnwidth,keepaspectratio]{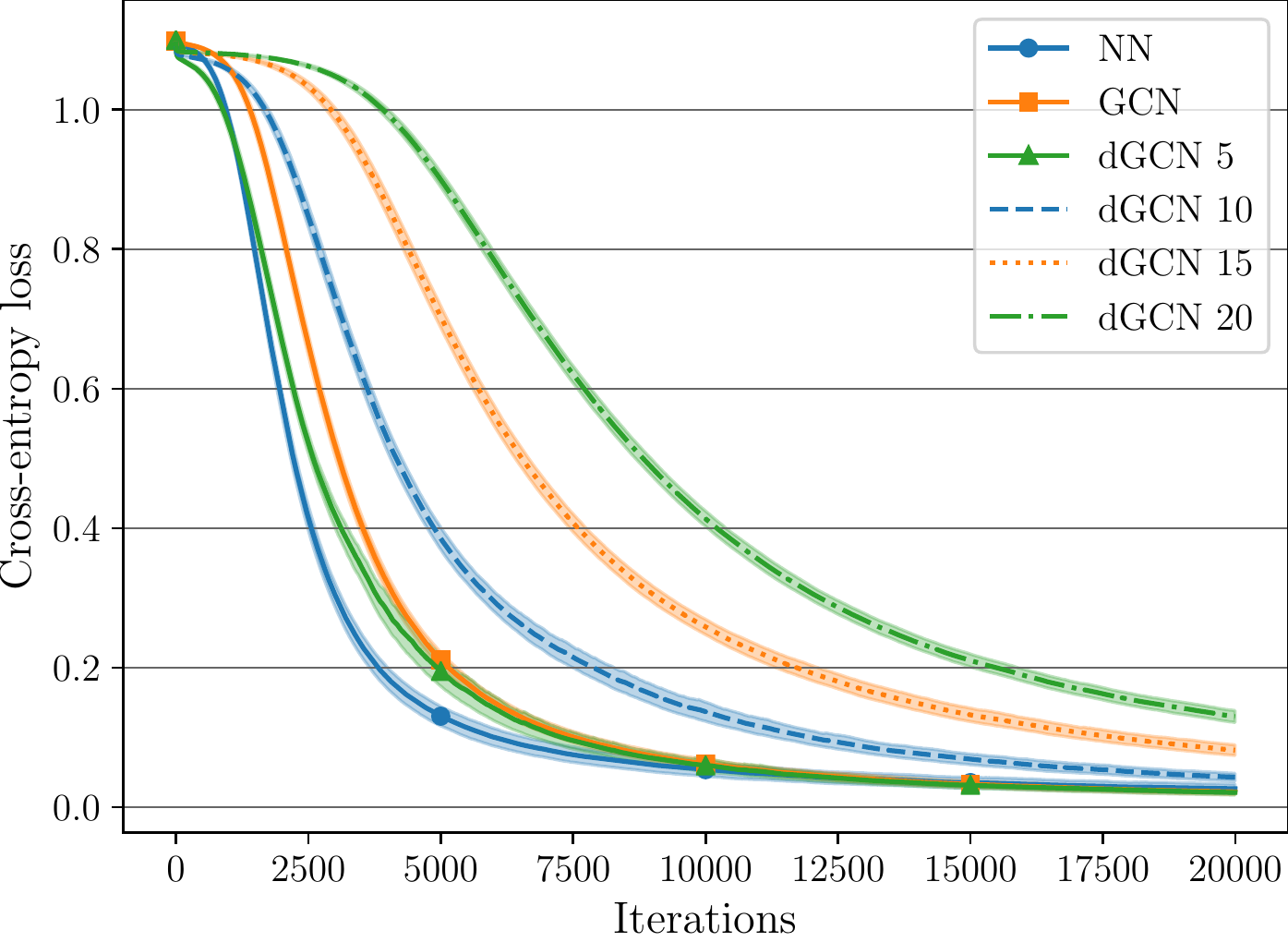}
} \\
\subfloat[CORA (accuracy)]{
\includegraphics[width=0.63\columnwidth,keepaspectratio]{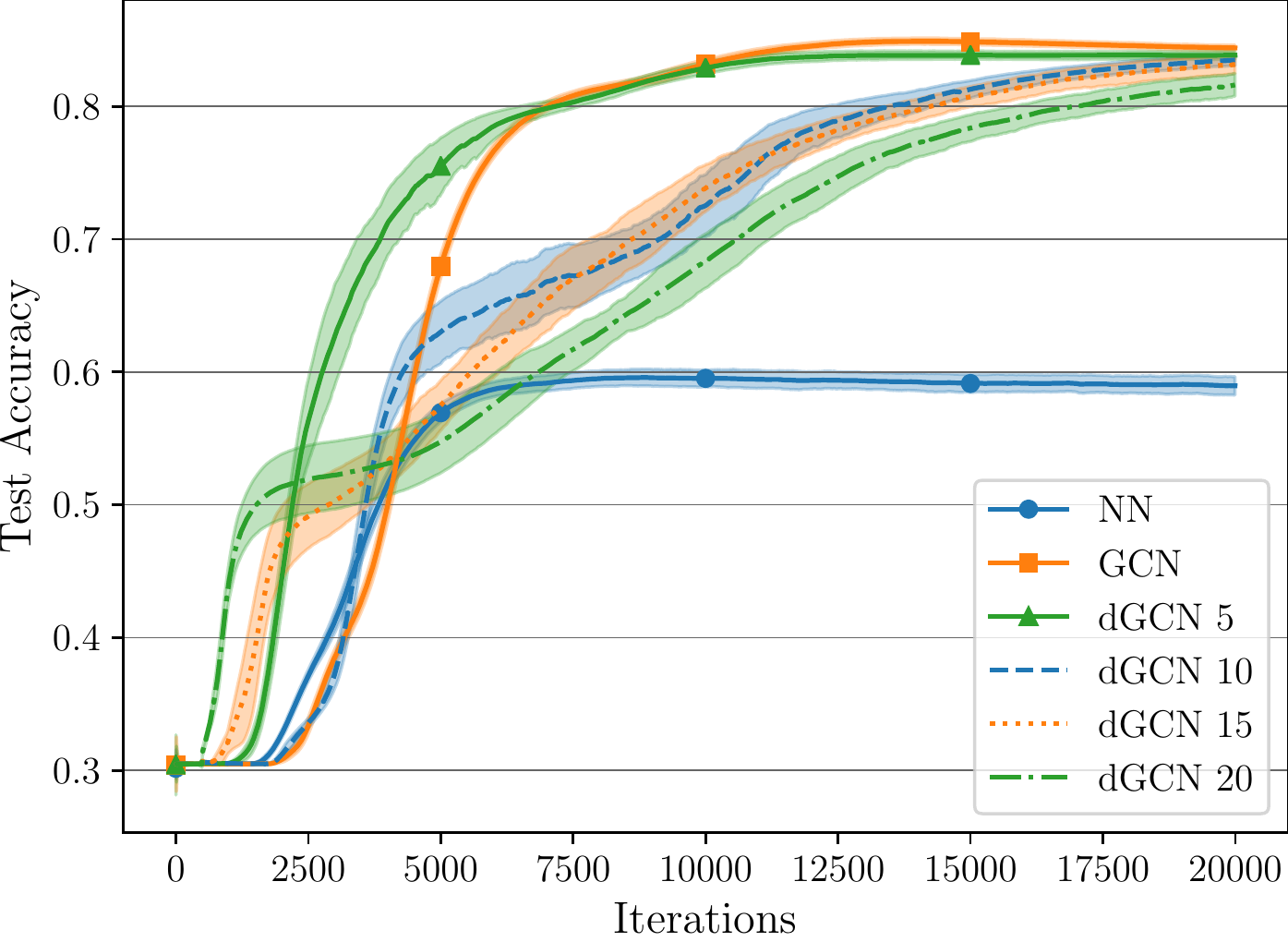}
}
\subfloat[CITESEER (accuracy)]{
\includegraphics[width=0.63\columnwidth,keepaspectratio]{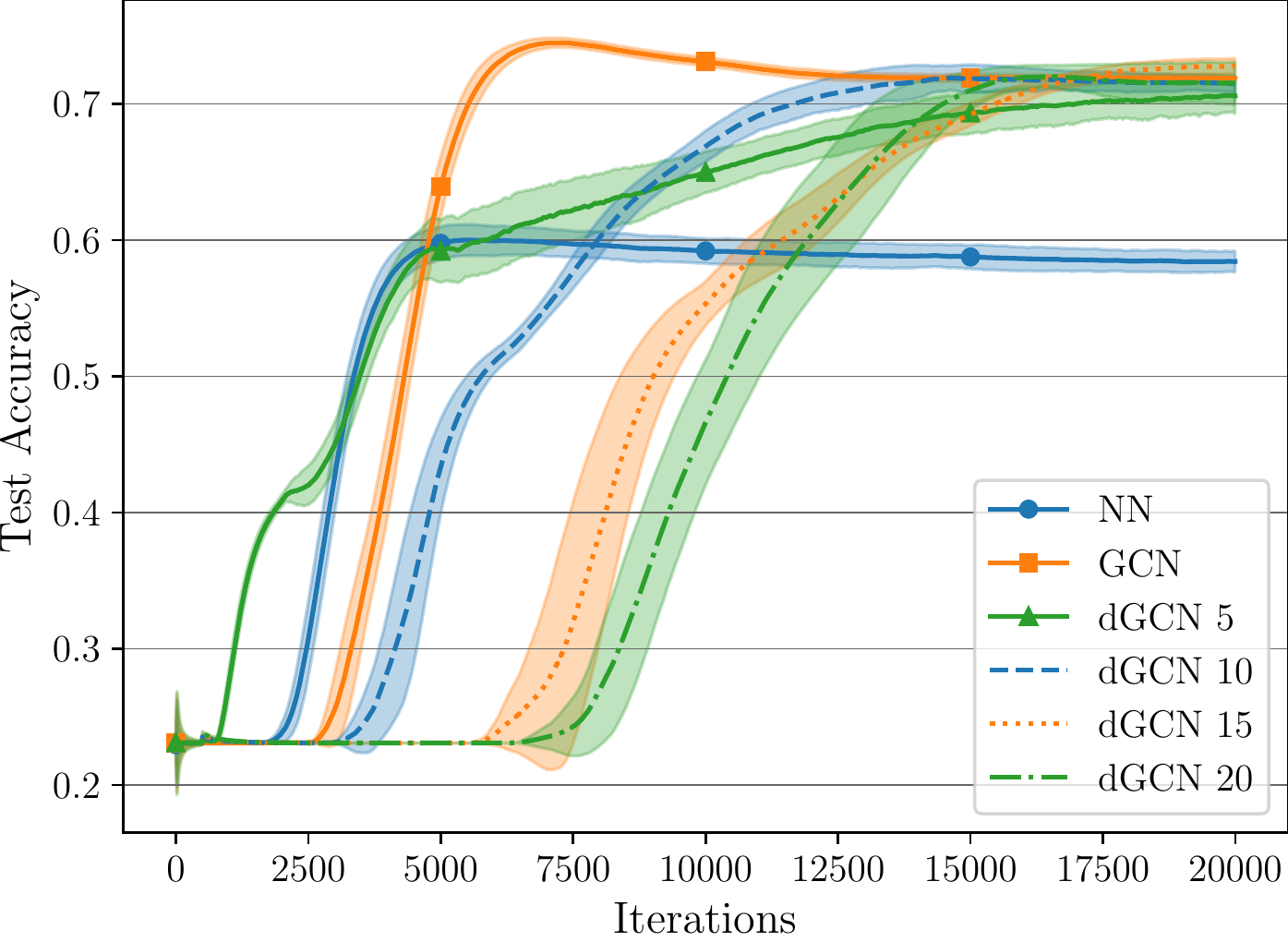}
}
\subfloat[PUBMED (accuracy)]{
\includegraphics[width=0.63\columnwidth,keepaspectratio]{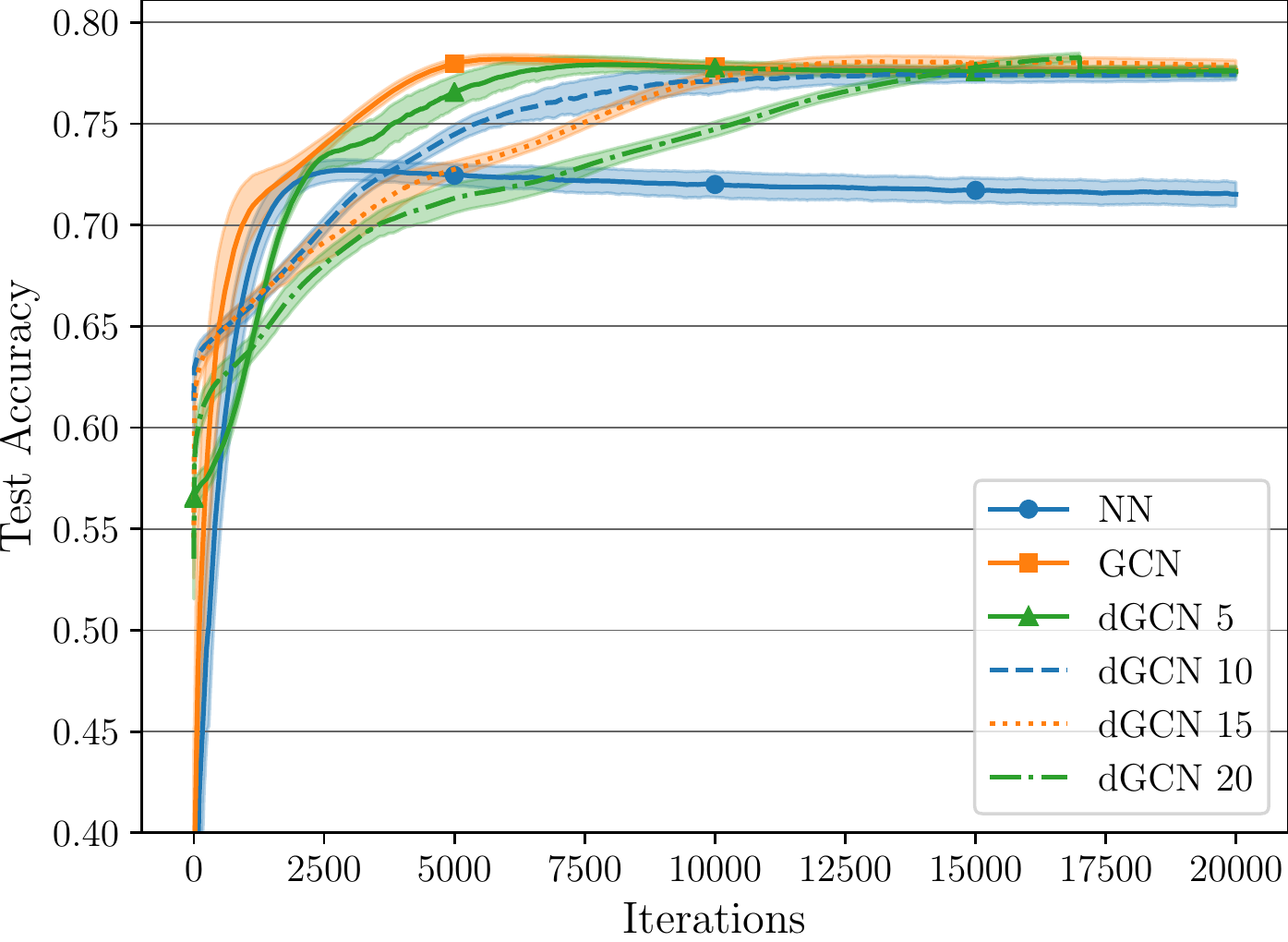}
}
\label{fig:losses}
\caption{Results for the proposed approach, compared to a standard GCN and a standard NN. First row are the training losses, second row are the test accuracies. (a)-(d) CORA; (b)-(e) CITESEER; (c)-(f) PUBMED.}
\end{figure*}

\textit{Datasets}: We apply our proposed approach to three node classification benchmarks taken from the literature on graph NNs, see \cite{kipf2017semi}. The characteristics of each dataset are briefly summarized in Table \ref{tab:datasets}, where we report the size of the underlying data graph, the number of classes for each problem, and the amount of nodes that are in the training subset. The nodes in the three datasets refer to scientific publications. Each node is endowed with a binary feature vector denoting the presence/absence of a set of words in the text. An edge in the graph represents a citation among two publications, and texts must be grouped in one of several categories.
A deeper description of the three datasets and their train/test splits can be found in \cite{yang2016revisiting}.

\textit{Architectures}: Our baseline is a standard GCN inspired by \cite{kipf2017semi}. It is built of two GC layers as in \eqref{eq:two_layer_gcn}. The hidden layer has $64$ units and ReLU activations. A dropout layer with probability $0.5$ is inserted after the input and after the hidden layer to stabilize training. As a second comparison, we consider the same GCN, but we set $\mathbf{D} = \mathbf{I}$, thus eliminating any contribution from the graph and obtaining a classical NN. All weight matrices are initialized from a Gaussian distribution with mean zero and standard deviation $10^{-3}$.  The standard GCN and NN share the same initialization, while for the distributed GCN (dGCN) we initialize a separate weight matrix for each agent.

\textit{Agents' Network}: We experiment with $5-10-15-20$ agents. Because the three benchmarks have low degree in the data graph \cite{kipf2017semi}, randomly partitioning data across the agents results with very high probability in fully-connected networks even for networks having $20$ agents. For this reason, we build a more realistic scenario by selecting $m$ seed nodes at random in the data graph, and then expanding their graph randomly in a breadth-first search. In this way, we obtain relatively well clustered assignments, such that sparsity between agents is preserved. To build the agents' connectivity graph $\mathbf{C}$, we then solve the optimization problem from Section \ref{sec:optimizing_consensus_matrix} with $\gamma=0.5$.

%\textcolor{red}{Il caso con 3 utenti è un po' borderline. Ha senso nel caso distribuito solo se non si forma una clique. Scegliendo $\gamma=0.5$, non vi vengono pienamente connesse tutte le reti? Se è così, dobbiamo aumentare il valore di $\gamma$ (vedi nuova formulazione dei problemi) e rifare le curve. In tal caso, non metterei il caso 3 agenti. Farei qualcosa tipo 10, 20, 30 agenti, bloccando il valore di $\gamma$ per far sì che le reti non vengano pienamente connesse.}.

\renewcommand{\arraystretch}{1.2}
\begin{table}[t]
\centering
\normalsize
\caption{Summary of the three benchmarks used in Section \ref{sec:experimental_evaluation}.}
\begin{tabular}{lccccc}
\textbf{Dataset} & $\lvert \mathcal{V}_D \rvert$ & $\lvert \mathcal{E}_D \rvert$ & $\lvert \mathcal{T}_D \rvert$ &  \textbf{Classes} & \textbf{Features}  \\
\hline
CITESEER & 2110 & 3668 & 120 & 6 & 3703 \\
CORA  & 2810 & 7981 & 140 & 7 & 2879 \\
PUBMED  & 19717 & 44324 & 60 & 3 & 500 \\
\end{tabular}
\label{tab:datasets}
\end{table}

\textit{Experiments}: The centralized architectures (i.e., NN and GCN) are trained using gradient descent; whereas, Algorithm 1 is used to train our distributed GCN. In all cases, we used a constant step-size rule, where the learning rate is fine-tuned manually for each dataset/algorithm to obtain the fastest convergence speed. All experiments are averaged $10$ times with respect to the initial weights, and figures illustrate both the mean and the standard deviation. All experiments are implemented in JAX \cite{jax2018github}.

\subsection{Comparisons and discussion}
In Fig. 3 we summarize the results of the simulations. Each column represents a dataset, while we show the training losses on the upper row (in log scale) and the test accuracies on the bottom row. The first trivial observation we can make is that the GCN clearly outperforms the standard NN in all three scenarios in terms of test accuracy. While this is far from being novel \cite{kipf2017semi}, it is worth underlining it here as the clearest motivation for proposing a distributed protocol for graph convolutional networks. Secondly, we can see that the dGCN asymptotically matches the performance of the centralized GCN when looking at accuracy in all three cases. In term of losses, it matches very closely the performance of GCN for CORA and PUBMED, while we see a slightly slower convergence rate for CITESEER. In Section \ref{subsec:different_update_rule} we look at how to close this gap by modifying the update rule of the proposed approach. \change{In all our experiments, consensus between the parameters belonging to different agents is reached relatively quickly, with the average absolute distance between any two set of parameters decreasing below $10^{-2}$ in approximately 200-300 iterations. The averaging step of our algorithm (line 5 in Algorithm \ref{algo:distributed_gcn_training}) is essential to guarantee stability, as we show with an ablation study later on in Section \ref{subsec:ablation_study}.}

\subsection{Results with sparse connectivity}
\label{subsec:results_with_sparse_connectivity}

Up to now, we assumed that the connectivity between agents matches the connectivity between data points, i.e., \eqref{eq:gc_layer_dist} is feasible according to the agents' topology. Next, we evaluate the performance of our method whenever this assumption is relaxed and the connectivity of the agents does not exactly match the connectivity of the underlying data graph (e.g., because of physical constraints in the available communications among agents). To this end, we replicate the experiments of the previous section, but we randomly drop up to $75\%$ of the connections between agents, while ensuring that the agents' graph remains connected overall. In addition, we experiment with ring and line topologies, to show the effect of the most minimal connectivity. After this, we renormalize both the data graph and the agents' graph and run our dGCN. First, we can evaluate the impact of removing all data graph connections that are not consistent with the new agents' connectivity in Fig. \ref{fig:drop}(a), where we show the number of edges remaining in $\mathcal{E}_D$ after the renormalization step. Note that, even in the most minimal connectivity (the line one), only up to $35\%$ of the original data connections are lost. The results in terms of training loss and test accuracies for the CORA dataset with $10$ agents are shown in Fig. \ref{fig:drop}(b)-(c). As we can notice from Fig. \ref{fig:drop}, a strong reduction of network connectivity impacts mildly the performance of the proposed strategy. Results for the other two datasets are similar and are omitted for brevity.

\begin{figure*}
    \centering
    \subfloat[Sparsity]{\includegraphics[width=0.62\columnwidth]{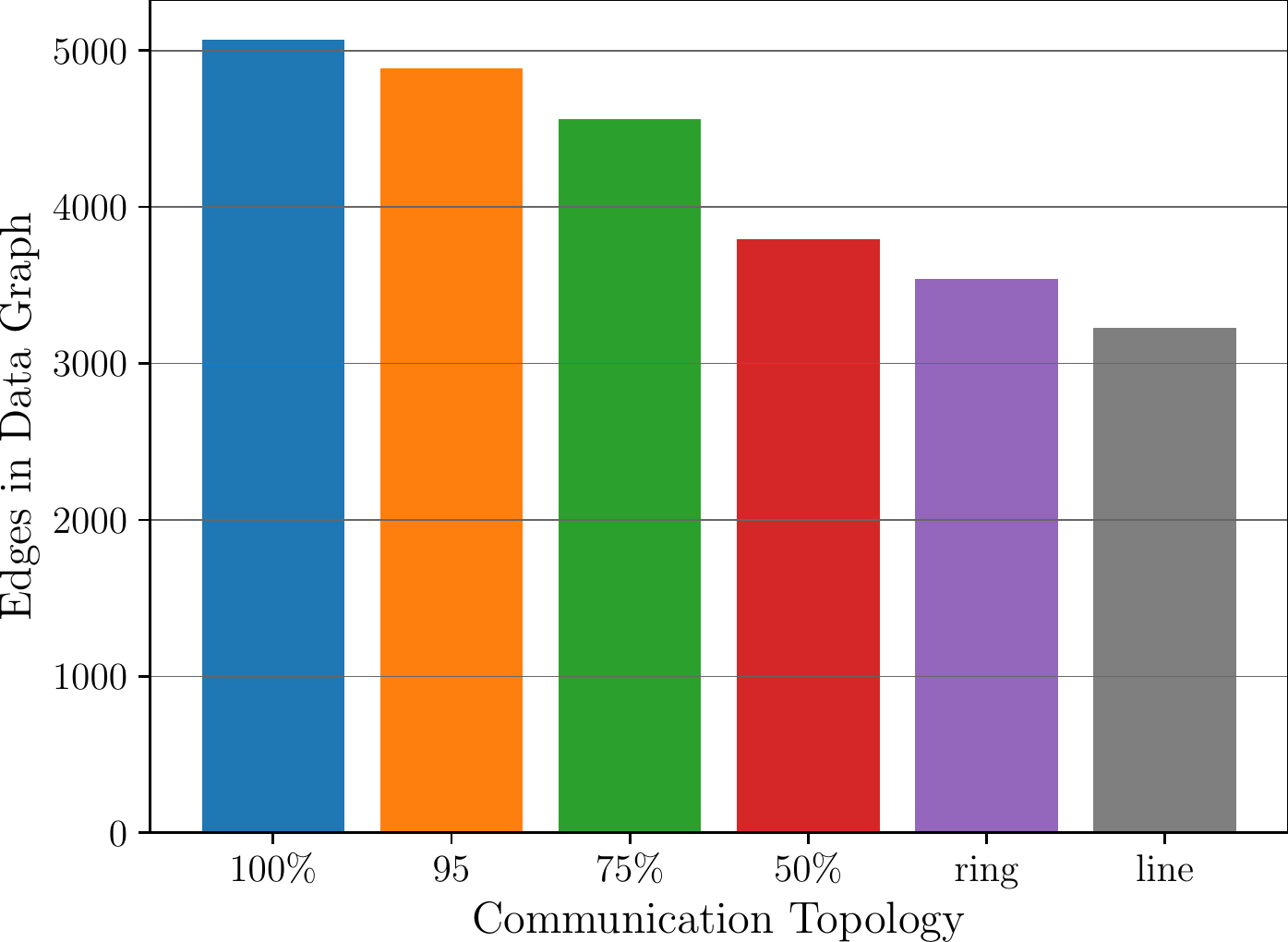}}\qquad
    \subfloat[Loss]{\includegraphics[width=0.62\columnwidth]{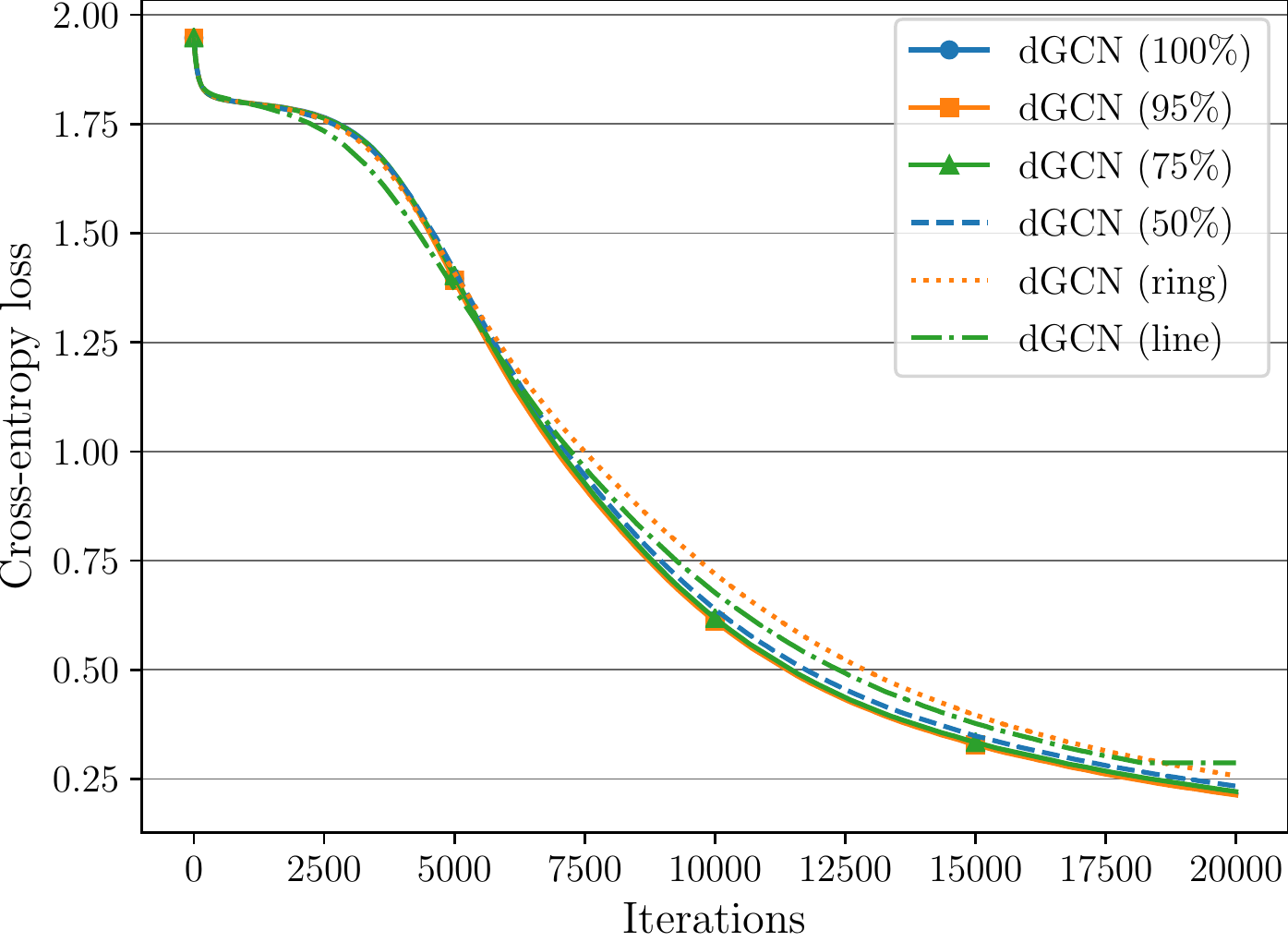}}\qquad
    \subfloat[Accuracy]{\includegraphics[width=0.62\columnwidth]{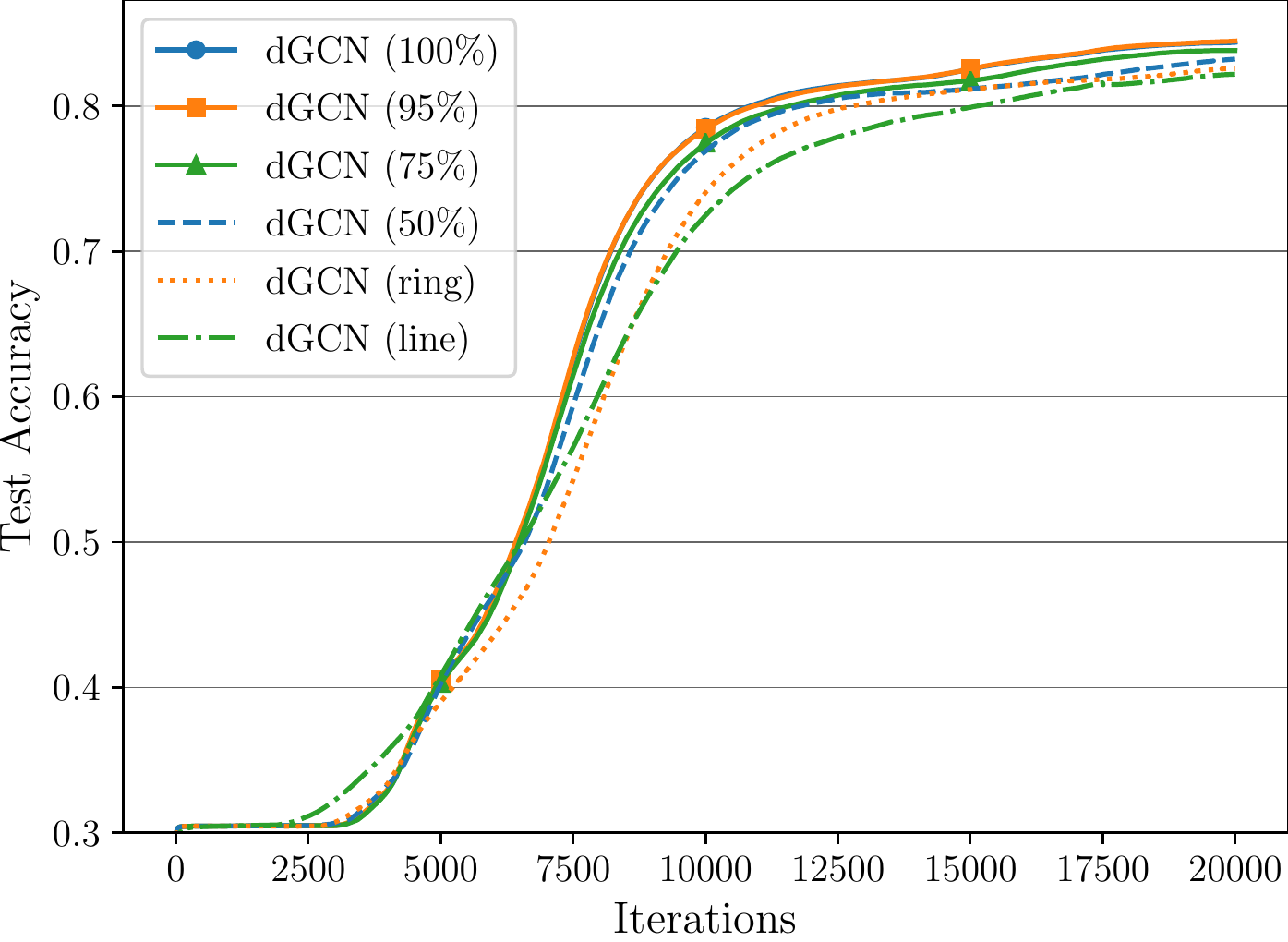}}
    \caption{\change{Results of the proposed approach when the agents' connectivity is sparse. A percentage between brackets denotes how many edges in the agents' graph have been kept, while the remaining ones have been randomly removed. (a) Number of edges remaining in $\mathcal{E}_D$ (for CORA) after removing those not consistent with a sparse agents' connectivity graph. (b) Training loss. (c) Test accuracy.}}
    \label{fig:drop}
\end{figure*}

%\begin{figure*}
%    \centering
%    \subfloat[Loss]{\includegraphics[width=0.95\columnwidth]{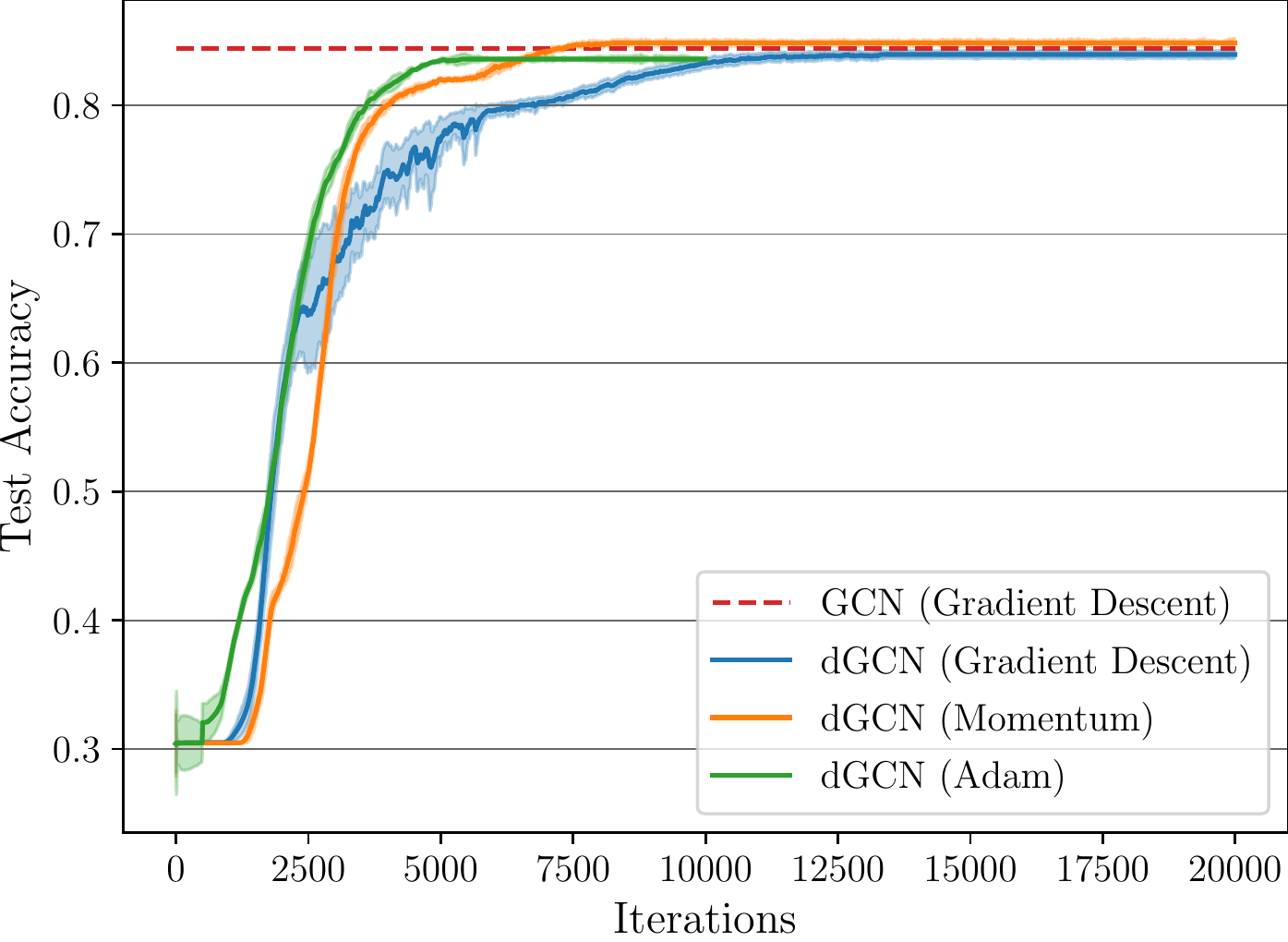}}
%    \subfloat[Accuracy]{\includegraphics[width=0.95\columnwidth]{Images/smooth_acc_opti.pdf}}
%    \caption{Results of the proposed approach when replacing the gradient descent step with a momentum optimizer or an Adam-like step. (a) Training loss. (b) Test accuracy.}
%    \label{fig:adam}
%\end{figure*}

\begin{figure}
    \centering
    \includegraphics[width=0.85\columnwidth]{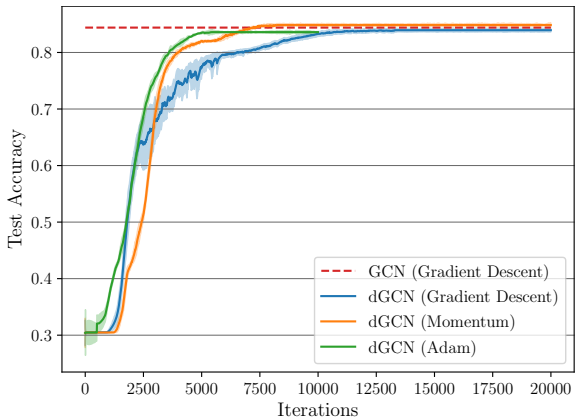}
    \caption{Results of the proposed approach when replacing the gradient descent step with a momentum optimizer or an Adam-like step (training loss).}
    \label{fig:adam}
\end{figure}

\subsection{Results with a different update rule}
\label{subsec:different_update_rule}
While we framed our method in the context of a standard gradient descent, the only requirement for convergence is that the local update rule for every agent provides a reasonably good descent direction. To this end, we replace the GD step in \eqref{eq:dist_gd} with an Adam and a momentum optimizers, where the additional hyper-parameters are always manually fine-tuned for the quickest convergence. 

In our implementation, for Adam every agent keeps track of a local set of first-order and second-order statistics of the data that are used for the update step, while for momentum we simply reuse a running average of the agents' local gradients. We consider a consensus step on the weights after updating the local statistics, although we have found no noticeable difference in interleaving the two. The results are shown in Fig. \ref{fig:adam} for a maximum number of $20000$ iterations of optimization. We see that both techniques are able to vastly accelerate the algorithm compared to a naive gradient descent. This is especially important in a distributed context, where each iteration requires exchanging data over the network.

\begin{figure}
    \centering
    \subfloat[Loss]{\includegraphics[width=0.85\columnwidth]{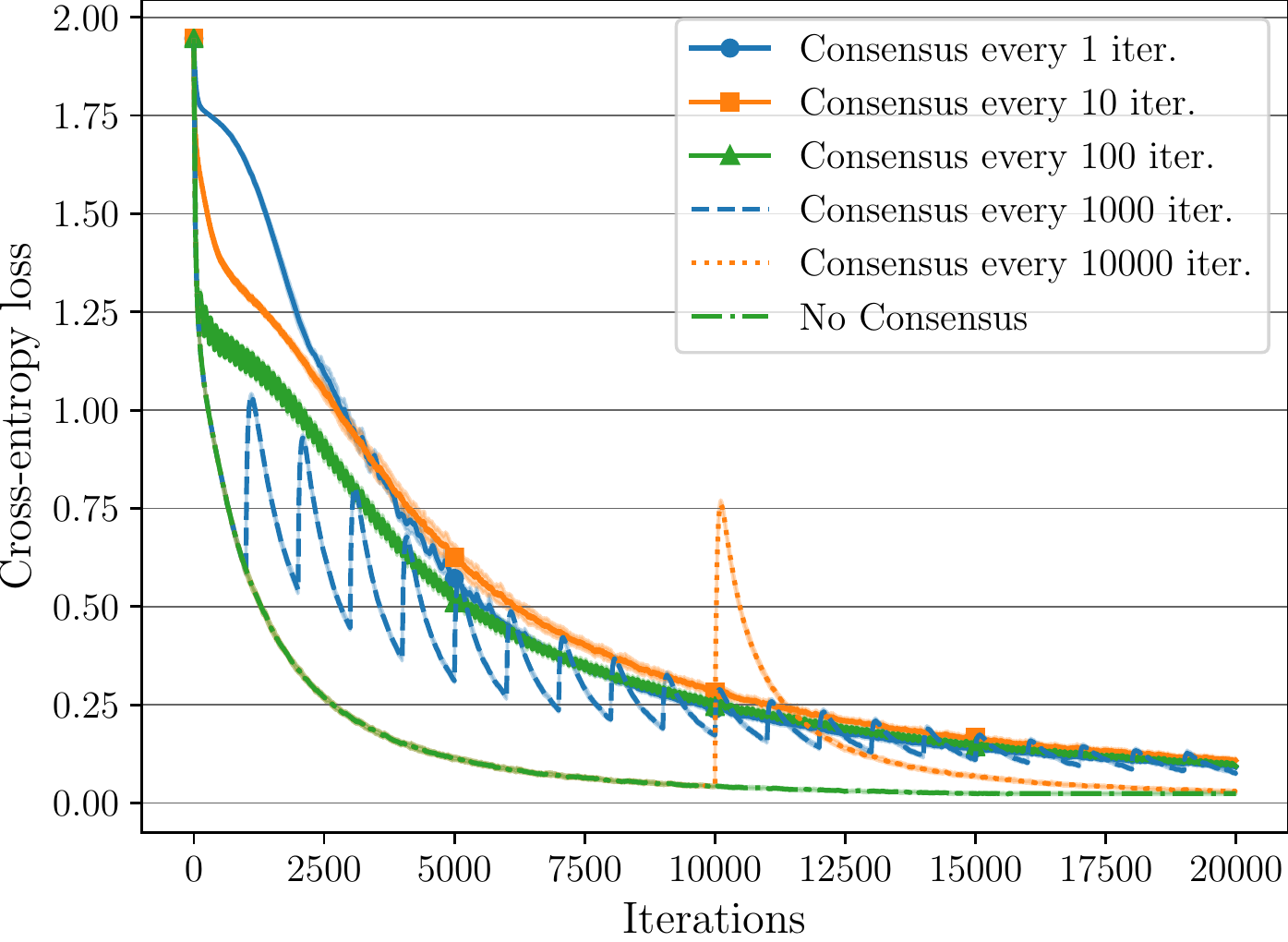}} \\
    \subfloat[Accuracy]{\includegraphics[width=0.85\columnwidth]{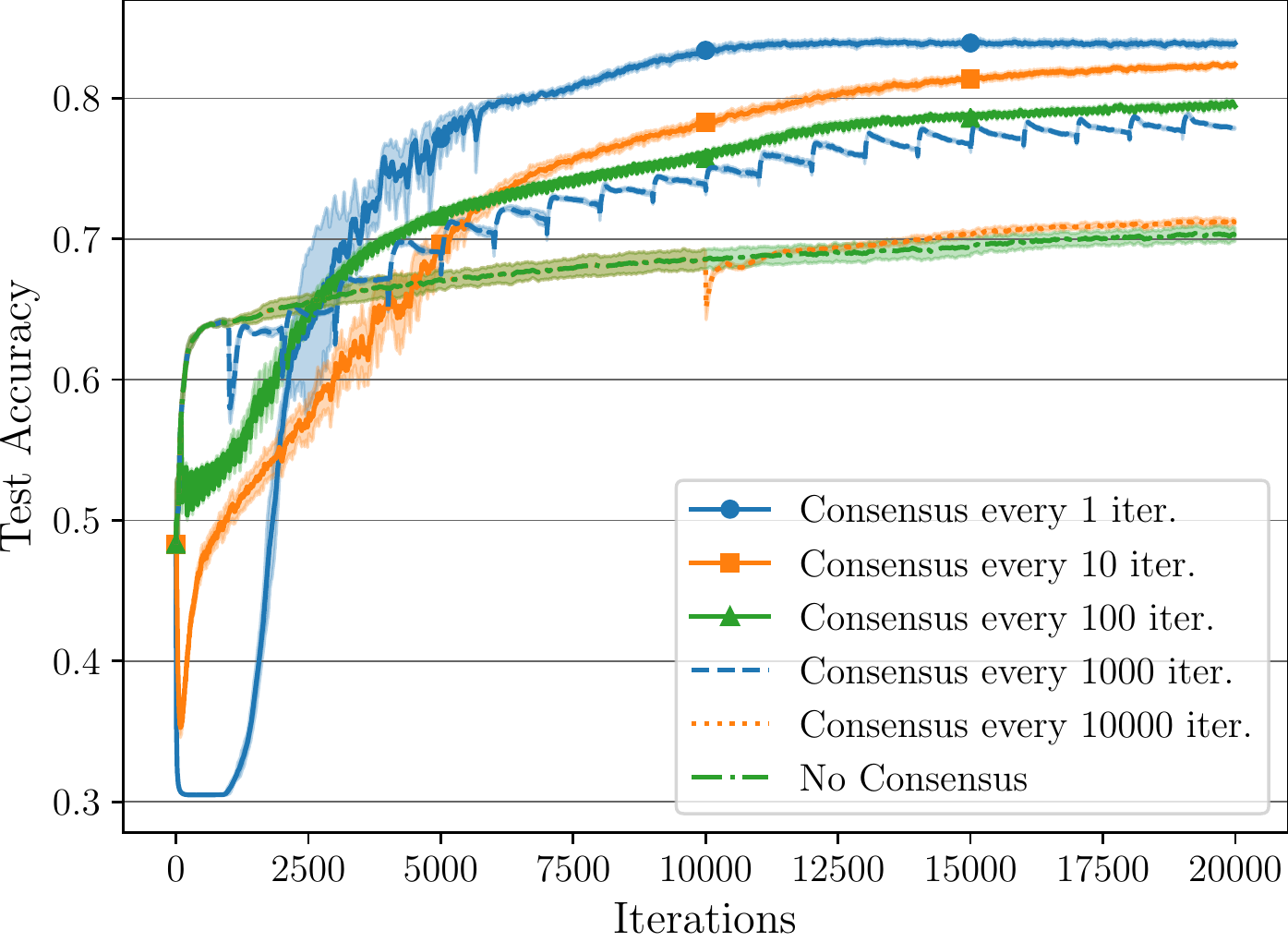}}
    \caption{Results of the proposed approach when considering consensus steps every $x$ gradient descent steps. (a) Training loss. (b) Test accuracy.}
    \label{fig:ablation_study}
\end{figure}

\subsection{Ablation study}
\label{subsec:ablation_study}

In this paragraph, we illustrate a simple ablation study on the consensus step  \eqref{eq:consensus_step} \change{(i.e., step 5 in Algorithm \ref{algo:distributed_gcn_training})}. In particular, we evaluate the results of the algorithm when multiple gradient descent steps are executed before each consensus step, and with an ablated version where no consensus is ever performed. The results are shown in Fig. \ref{fig:ablation_study}. From \ref{fig:ablation_study}, we can see that running a consensus step every iteration is required for reaching the highest accuracy.  In fact, long intervals with no consensus tend to destabilize the training phase, while removing the consensus step quickly makes all agents go into a strong overfitting phase. \change{This last behaviour can be understood as each agent minimizing the loss only on its own local portion of the training data graph, with minimal influence from the outside parts (i.e., the parts belonging to other agents). In fact, we expect the importance of the consensus constraint to grow as the ratio between the number of agents and the overall size of the data graph decreases.}

\change{
\subsection{Experiments on a traffic prediction scenario}
\label{subsec:traffic}

\renewcommand{\arraystretch}{1.2}
\begin{table}[t]
\centering
\small
\caption{Results of three distributed algorithms on the traffic prediction benchmarks from Section \ref{subsec:traffic}}
\begin{tabular}{l|c|c|c|c}
\textbf{Dataset} & Linear & GCN  & Order-2 GCN \\
\hline
PeMSD4 & 0.231 $\pm$ 0.001 & 0.148 $\pm$ 0.002 & 0.126 $\pm$ 0.001 \\
PeMSD8 & 0.266 $\pm$ 0.001 & 0.185 $\pm$ 0.002 & 0.147 $\pm$ 0.003
\end{tabular}
\label{tab:traff_pred}
\end{table}

We conclude the experimental section validating our model on two highway traffic prediction datasets
(PeMSD4 and PeMSD8) collected by the Caltrans Performance Measurement System
(PeMS) of California \cite{chen2001pems}. The measurement system collects data from more than 39000 sensors every 30 seconds. PeMSD4 aggregates the data in January and February 2018 of the San Francisco Bay Area, while PeMSD8 aggregates the data from July to September 2016 in San Bernardino. We use the same preprocessing procedure from \cite{guo2019astg}. In particular, we resample the data at 5 minutes interval, remove any sensor that is distant more than 3.5 km from any other sensor, and fill any missing value using linear interpolation. We also perform the same train/test split as in \cite{guo2019astg}. The adjacency data matrix is given by $A_{ij} = \exp\left(-d_{ij}^2/10\right)$, with $d_{ij}$ being the distance between two sensors \cite{yu2018spatio}. The matrix is then sparsified by removing all elements below 0.5 \cite{yu2018spatio}.

We assume that each sensor sends its measurements to the closest among $6$ base stations that we artificially place on a ring topology across the area (i.e., we have $m=6$ agents equispaced on a ring connectivity). For each timestep $t$, the input $\mathbf{x}_i$ for each sensor is a collection of the last $6$ recordings (i.e., the last half hour of measurements). Each sensor is then tasked with predicting the successive measurement. The loss function is computed by averaging the squared error for all timesteps in the training dataset and for all sensors. Then, beyond the distributed GCN of the previous section, we also consider a distributed linear model obtained by removing the hidden layer from the GCN definition, and a distributed order-2 GCN (see Section \ref{subsec:extension_to_order_p}) implemented with Chebyshev polynomials \cite{gama2020signals}. We average the results over $5$ different initialization of the parameters, and we report the resulting mean-squared error on the test portion of the datasets in Table \ref{tab:traff_pred}. Both distributed neural networks models significantly outperform the distributed linear one, showing the importance of having distributed training protocols for graph neural networks. The order-2 GCN further improves the results, at the cost of requiring approximately 3 times the amount of parameters exchanged over the network (as each layer is now parameterized by three different weight matrices).
}

\section{Conclusions and future work}
\label{sec:conclusions_and_future_work}
%
%The majority of works on distributed optimization of machine learning models consider classical data, such as vectors or images. 

In this paper, we propose the first algorithmic framework for distributed training of graph convolutional networks, which can be implemented in the absence of any form of central coordination. The proposed method extends previous works on distributed machine learning to the case data have some relational structure, in the form of pairwise connections between points. The resulting model is naturally distributed along three different lines: (i) during inference, (ii) during backpropagation, and (iii) during consensus phase.

Because this is one of the first works connecting these two fields of research, several extensions and contributions are possible. On one hand, our algorithm requires three communication steps instead of a single consensus, opening the possibility of compression and/or reduction protocols for the number of messages exchanged across the network. In addition, we plan to evaluate the algorithm on different graph NN variants and problems, especially in situations wherein the agents' connectivity might be time-varying. \change{An open challenge concerns situations in which our assumption of a proper matching between the data graph and the agents' graph is not satisfied. In this case, `lost' edges in the data graph should be recovered exploiting multi-hop exchanges of information among agents, which will be explored in future works.} Finally, an important open research question in graph neural networks is scaling up to larger graphs and streaming data. We believe our approach might represent an important stepping stone in this exciting research direction.

\appendices

\section{A Lemma on convergence of scalar sequences}
We introduce a lemma dealing with convergence of some sequences, which will be used in the following arguments.\smallskip

\begin{lemma}\label{Lemma_sequences}
Let $0<\zeta<1$, and  let $\{\alpha^t\}$ and  $\{\nu^t\}$ be two positive scalar sequences. Then, the following hold:
\begin{description}
  \item[(a)] If $\displaystyle\lim_{t\rightarrow\infty}\alpha^t=0$, then\vspace{-0.2cm}
   \begin{equation}\label{lemma3a}
       \displaystyle \lim_{t\rightarrow\infty}\,\sum_{n=1}^{t}\zeta^{t-n}\alpha^n=0.
    \end{equation}
  \item[(b)] If $\sum_{t=1}^\infty (\alpha^t)^2<\infty$ and $\sum_{t=1}^\infty (\nu^t)^2<\infty$, then
   \begin{align}\label{lemma3b}
%     & \hbox{\rm (b.1):}\;\;\displaystyle\lim_{n\rightarrow\infty}\, \sum_{k=1}^{n}\sum_{l=1}^{k}\lambda^{k-l}\beta[l]^2<\infty, \label{lemma1b1}\\
     \displaystyle\lim_{t\rightarrow\infty}\, \sum_{n=1}^{t}\sum_{l=1}^{n}\zeta^{n-l}\alpha^n\nu^l<\infty.
   \end{align}
\end{description}
\end{lemma}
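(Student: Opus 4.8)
The plan is to prove the two parts separately, using elementary manipulations of the geometric weights $\zeta^{t-n}$ together with standard facts about Cauchy sequences and absolutely convergent series.

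\textit{Part (a).} I would argue as follows. Fix $\varepsilon>0$. Since $\alpha^t\to 0$, choose $N$ such that $\alpha^n<\varepsilon$ for all $n>N$. Split the sum $\sum_{n=1}^{t}\zeta^{t-n}\alpha^n$ into the "old" block $n\le N$ and the "recent" block $N<n\le t$. For the recent block, bound $\alpha^n<\varepsilon$ and use $\sum_{n=N+1}^{t}\zeta^{t-n}\le\sum_{k=0}^{\infty}\zeta^{k}=\frac{1}{1-\zeta}$, giving a contribution at most $\frac{\varepsilon}{1-\zeta}$. For the old block, note that $\sum_{n=1}^{N}\zeta^{t-n}\alpha^n\le \zeta^{t-N}\sum_{n=1}^{N}\alpha^n$, and since $\zeta<1$ and $N$ is fixed, this tends to $0$ as $t\to\infty$; hence it is $<\varepsilon$ for $t$ large enough. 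Combining, $\limsup_{t\to\infty}\sum_{n=1}^{t}\zeta^{t-n}\alpha^n\le \frac{\varepsilon}{1-\zeta}$, and letting $\varepsilon\to 0$ gives the claim. (This is the familiar "convolution of a geometric kernel with a null sequence is null" argument.)

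\textit{Part (b).} The key observation is that the inner sum $\sum_{l=1}^{n}\zeta^{n-l}\nu^l$ is itself controlled in an $\ell^1$ sense: summing over $n$ and swapping the order of summation,
\begin{equation}
\sum_{n=1}^{\infty}\sum_{l=1}^{n}\zeta^{n-l}\nu^l=\sum_{l=1}^{\infty}\nu^l\sum_{n=l}^{\infty}\zeta^{n-l}=\frac{1}{1-\zeta}\sum_{l=1}^{\infty}\nu^l<\infty,\nonumber
\end{equation}
provided $\sum_l\nu^l<\infty$; since $\sum_l(\nu^l)^2<\infty$ does not by itself give $\sum_l\nu^l<\infty$, I would instead keep things quadratic and apply Cauchy–Schwarz. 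Precisely, write $\zeta^{n-l}=\zeta^{(n-l)/2}\cdot\zeta^{(n-l)/2}$ and apply Cauchy–Schwarz to the inner sum to get $\big(\sum_{l=1}^{n}\zeta^{n-l}\nu^l\big)\le\big(\sum_{l=1}^{n}\zeta^{n-l}\big)^{1/2}\big(\sum_{l=1}^{n}\zeta^{n-l}(\nu^l)^2\big)^{1/2}\le\frac{1}{\sqrt{1-\zeta}}\big(\sum_{l=1}^{n}\zeta^{n-l}(\nu^l)^2\big)^{1/2}$. Then the double sum in \eqref{lemma3b} is bounded by $\frac{1}{\sqrt{1-\zeta}}\sum_{n=1}^{\infty}\alpha^n\big(\sum_{l=1}^{n}\zeta^{n-l}(\nu^l)^2\big)^{1/2}$, and one more application of Cauchy–Schwarz in the variable $n$ bounds this by $\frac{1}{\sqrt{1-\zeta}}\big(\sum_n(\alpha^n)^2\big)^{1/2}\big(\sum_{n}\sum_{l=1}^{n}\zeta^{n-l}(\nu^l)^2\big)^{1/2}$. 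The first factor is finite by hypothesis; the second factor, by the order-swapping computation above applied to $(\nu^l)^2$ in place of $\nu^l$, equals $\frac{1}{1-\zeta}\sum_l(\nu^l)^2<\infty$. This yields finiteness of \eqref{lemma3b}.

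\textit{Main obstacle.} Part (a) is routine. The only delicate point in part (b) is organizing the Cauchy–Schwarz applications so that the hypotheses are used in their $\ell^2$ form rather than an unavailable $\ell^1$ form; the "split the geometric weight into two halves" trick is what makes this work, and I expect that to be the crux. A secondary technical care point is justifying the interchange of summation order, which is immediate here since all terms are nonnegative (Tonelli).
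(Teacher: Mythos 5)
Your proof is correct. Note first that the paper itself states Lemma~\ref{Lemma_sequences} without proof (it is invoked as a known auxiliary result from the distributed-optimization literature), so there is no in-paper argument to compare against; your write-up would in fact fill that gap. Part (a) is the standard ``geometric kernel convolved with a null sequence'' argument and is carried out correctly: the tail block is bounded by $\varepsilon/(1-\zeta)$ and the fixed head block is killed by $\zeta^{t-N}\to 0$. Part (b) is also sound: splitting $\zeta^{n-l}=\zeta^{(n-l)/2}\zeta^{(n-l)/2}$, applying Cauchy--Schwarz in $l$ and then in $n$, and finally swapping the order of summation (legitimate by nonnegativity) gives the bound $\frac{1}{1-\zeta}\big(\sum_n(\alpha^n)^2\big)^{1/2}\big(\sum_l(\nu^l)^2\big)^{1/2}<\infty$; since all terms are nonnegative the partial sums are monotone, so boundedness of the full double series indeed yields the stated finite limit. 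For what it is worth, the route most often taken in the literature for (b) is slightly more direct: use $\alpha^n\nu^l\le\tfrac12\big((\alpha^n)^2+(\nu^l)^2\big)$ termwise, so that
\begin{equation}
\sum_{n=1}^{t}\sum_{l=1}^{n}\zeta^{n-l}\alpha^n\nu^l\;\le\;\frac{1}{2}\sum_{n=1}^{t}(\alpha^n)^2\sum_{l=1}^{n}\zeta^{n-l}+\frac{1}{2}\sum_{l=1}^{t}(\nu^l)^2\sum_{n=l}^{t}\zeta^{n-l}\;\le\;\frac{1}{2(1-\zeta)}\Big(\sum_{n=1}^{\infty}(\alpha^n)^2+\sum_{l=1}^{\infty}(\nu^l)^2\Big),\nonumber
\end{equation}
which avoids the two-step Cauchy--Schwarz; both arguments use the hypotheses only in their $\ell^2$ form, which, as you rightly identified, is the crux.
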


\section{Proof of Theorem 2}

\textbf{Point (a):} Let $\mathbf{\Pi}_{\mathcal{S}^\perp}= \mathbf{I}-\mathbf{\Pi}_{\mathcal{S}}$ be the projector onto the subspace orthogonal to $\mathcal{S}$. We will now study the temporal evolution of
\begin{equation}\label{x_ort}
\overline{\mathbf{w}}_{\mathcal{S}^\perp}^t= \mathbf{\Pi}_{\mathcal{S}^\perp}\overline{\mathbf{w}}^t=
\overline{\mathbf{w}}^t-\mathbf{\Pi}_{\mathcal{S}}\overline{\mathbf{w}}^t= \overline{\mathbf{w}}^t- \overline{\mathbf{w}}_{\mathcal{S}}^t
\end{equation}
i.e., the component of $\overline{\mathbf{w}}^t$ that lies in the subspace orthogonal to $\mathcal{S}$. To this aim, multiplying (\ref{Dist_gradient_proj}) from the left side by $\mathbf{\Pi}_{\mathcal{S}^\perp}$, and letting $\mathbf{c}^t=\mathbf{\Pi}_{\mathcal{S}^\perp} (\mathbf{C}\otimes \mathbf{I}_p) \nabla L(\overline{\mathbf{w}}^t)$, we obtain:
\begin{align}\label{orthogonal_recursion}
\mathbf{\Pi}_{\mathcal{S}^\perp}\overline{\mathbf{w}}^{t+1}\,=\,\mathbf{\Pi}_{\mathcal{S}^\perp}(\mathbf{C}\otimes \mathbf{I}_p)\overline{\mathbf{w}}^t-\eta^t\mathbf{c}^t.
\end{align}
Now, since $\mathbf{\Pi}_{\mathcal{S}^\perp}(\mathbf{C}\otimes \mathbf{I}_p)=\mathbf{\Pi}_{\mathcal{S}^\perp}(\mathbf{C}\otimes \mathbf{I}_p)\mathbf{\Pi}_{\mathcal{S}^\perp}$ if (A1) holds, eq. (\ref{orthogonal_recursion}) becomes:

\begin{align}\label{orthogonal_recursion2}
\overline{\mathbf{w}}_{\mathcal{S}^\perp}^{t+1}=&\,=\,
\mathbf{\Pi}_{\mathcal{S}^\perp}(\mathbf{C}\otimes \mathbf{I}_p)\,\overline{\mathbf{w}}_{\mathcal{S}^\perp}^{t}-\eta^t\mathbf{c}^t\nonumber\\
&\,=\, \bigg(\Big(\mathbf{C}- \frac{1}{m}\mathbf{1}_m\mathbf{1}_m^T\Big)\otimes \mathbf{I}_p\bigg)\overline{\mathbf{w}}_{\mathcal{S}^\perp}^{t}-\eta^t\mathbf{c}^t.
\end{align}
Iterating recursion (\ref{orthogonal_recursion2}), we have:
\begin{align}\label{orthogonal_recursion3}
\overline{\mathbf{w}}_{\mathcal{S}^\perp}^{t}\,=\,&\,\bigg(\Big(\mathbf{C}- \frac{1}{m}\mathbf{1}_m\mathbf{1}_m^T\Big)\otimes \mathbf{I}_p\bigg)^t\overline{\mathbf{w}}_{\mathcal{S}^\perp}^{0} \nonumber\\
&-\sum_{l=1}^t\bigg(\Big(\mathbf{C}- \frac{1}{m}\mathbf{1}_m\mathbf{1}_m^T\Big)\otimes \mathbf{I}_p\bigg)^{t-l}\eta^{l-1}\mathbf{c}^{l-1}.
\end{align}
Now, taking the norm of (\ref{orthogonal_recursion3}) and using (B2), (A2), we obtain:
\begin{align}\label{orthogonal_recursion4}
\|\overline{\mathbf{w}}_{\mathcal{S}^\perp}^{t}\|\,\leq\,&\,(1-\gamma)^t\|\overline{\mathbf{w}}_{\mathcal{S}^\perp}^{0}\|+G\sum_{l=1}^t(1-\gamma)^{t-l}\eta^{l-1}.
\end{align}
Then, if (C1) holds, taking the limit of (\ref{orthogonal_recursion4}), since $(1-\gamma)^t\rightarrow 0$ as $t\rightarrow\infty$
and $\displaystyle\lim_{t\rightarrow\infty}\sum_{l=1}^t(1-\gamma)^{t-l}=\frac{1}{\gamma}$, we have:
\begin{align}\label{orthogonal_limit1}
\lim_{t\rightarrow\infty}\|\overline{\mathbf{w}}_{\mathcal{S}^\perp}^{t}\|\,\leq\,&\,\frac{G\eta}{\gamma}=O(\eta),
\end{align}
which proves (\ref{sub_lim1}) [cf. (\ref{x_ort})]. On the other side, if (C2) holds, invoking Lemma 3(a) [cf. (\ref{lemma3a})], from (\ref{orthogonal_recursion4}) we conclude that:
\begin{align}\label{orthogonal_limit2}
\lim_{t\rightarrow\infty}\|\overline{\mathbf{w}}_{\mathcal{S}^\perp}^{t}\|\,=\, 0,
\end{align}
thus proving also (\ref{sub_lim2}), and completing the proof of point (a).

\textbf{Point (b):} We now prove the convergent behavior of $\overline{\mathbf{w}}^t_{\mathcal{S}}$, i.e., the component of $\overline{\mathbf{w}}^t$ that lies in $\mathcal{S}$. Thus, multiplying (\ref{Dist_gradient_proj}) from the left side by $\mathbf{\Pi}_{\mathcal{S}}$, and using (A1), we obtain:
\begin{align}\label{x_bar_evolution}
\overline{\mathbf{w}}^{t+1}_{\mathcal{S}}=\overline{\mathbf{w}}^t_{\mathcal{S}}-\eta^t\mathbf{\Pi}_{\mathcal{S}}\nabla  L(\overline{\mathbf{w}}^t).
\end{align}
Now, under (A1), invoking the descent lemma on $L$ and using (\ref{x_bar_evolution}), we obtain:
\begin{align}\label{descent_function}
&L(\overline{\mathbf{w}}^{t+1}_{\mathcal{S}})\,\leq\,L(\overline{\mathbf{w}}^{t}_{\mathcal{S}})-\eta^t\nabla L(\overline{\mathbf{w}}_{\mathcal{S}}^t)^T\mathbf{\Pi}_{\mathcal{S}}\nabla L(\overline{\mathbf{w}}^t)\nonumber\\
&\qquad\qquad\quad+\frac{c_L}{2}(\eta^t)^2\|\mathbf{\Pi}_{\mathcal{S}}\nabla L(\overline{\mathbf{w}}^t)\|^2.
\end{align}
Now, summing and subtracting the vector $\nabla  L(\overline{\mathbf{w}}_{\mathcal{S}}^t)$ properly in the second term on the RHS of (\ref{descent_function}), we obtain:
\begin{align}\label{descent_function2}
L(\overline{\mathbf{w}}^{t+1}_{\mathcal{S}})\,&\stackrel{(a)}{\leq}\, L(\overline{\mathbf{w}}^{t}_{\mathcal{S}})-\eta^t \|\nabla L(\overline{\mathbf{w}}_{\mathcal{S}}^t)\|^2_{\mathbf{\Pi}_{\mathcal{S}}}+\frac{c_L}{2}G^2(\eta^t)^2  \nonumber\\
&\;\;\;-\eta^t  \nabla L(\overline{\mathbf{w}}_{\mathcal{S}}^t)^T\mathbf{\Pi}_{\mathcal{S}}\big(\nabla L(\overline{\mathbf{w}}^t)-\nabla L(\overline{\mathbf{w}}_{\mathcal{S}}^t)\big)\nonumber\\
&\hspace{-1cm}\stackrel{(b)}{\leq}\,L(\overline{\mathbf{w}}^{t}_{\mathcal{S}})-\eta^t g^t+\frac{c_L}{2}G^2(\eta^t)^2+\eta^t c_L G\|\overline{\mathbf{w}}^t-\overline{\mathbf{w}}_{\mathcal{S}}^t\|\nonumber\\
&\hspace{-1cm}\stackrel{(c)}{\leq}\, L(\overline{\mathbf{w}}^{t}_{\mathcal{S}})-\eta^t g^t+\frac{c_L}{2}G^2(\eta^t)^2+ c_L G\|\overline{\mathbf{w}}_{\mathcal{S}^\perp}^{0}\|(1-\gamma)^t \eta^t\nonumber\\
&\hspace{-.5cm} + c_L G^2 \eta^t \sum_{l=1}^t(1-\gamma)^{t-l}\eta^{l-1},
\end{align}
where in (a) we used (B2); (b) comes from (B1), (B2), and (\ref{g_nonconvex}); and in (c) we used (\ref{orthogonal_recursion4}) [cf. (\ref{x_ort})]. Now, applying recursively (\ref{descent_function2}), since under (B3) $L$ is bounded from below (w.l.o.g., we consider $L(\overline{\mathbf{w}}^{t+1}_{\mathcal{S}})\geq 0$ for all $t$), we obtain:
\begin{align}\label{descent_function3}
&\sum_{n=0}^t \eta^n g^n \,\leq\;\;  L(\overline{\mathbf{w}}^{0}_{\mathcal{S}})+ c_LG \|\overline{\mathbf{w}}_{\mathcal{S}^\perp}^{0}\|\sum_{n=0}^t(1-\gamma)^n \eta^n\nonumber\\
&\hspace{.5cm}+\frac{c_L}{2}G^2\sum_{n=0}^t (\eta^n)^2+ c_L G^2 \sum_{n=0}^t\sum_{l=1}^n(1-\gamma)^{n-l}\eta^n\eta^{l-1}.
\end{align}
Using 
%\begin{align}
$\displaystyle\sum_{n=0}^t \eta^n g^n\geq \left(\sum_{n=0}^t \eta^n\right) g_{best}^t$
%\end{align}
in (\ref{descent_function3}) [cf. (\ref{g_best})], we get:
\begin{align}\label{descent_function4}
&\hspace{-.2cm}g_{best}^t \leq \bigg( L(\overline{\mathbf{w}}^{0}_{\mathcal{S}})+ c_LG \|\overline{\mathbf{w}}_{\mathcal{S}^\perp}^{0}\|\sum_{n=0}^t(1-\gamma)^n \eta^n  +\frac{c_L}{2}G^2\sum_{n=0}^t (\eta^n)^2 \nonumber\\
&\;  
+ c_L G^2 \sum_{n=0}^t\sum_{l=1}^n(1-\gamma)^{n-l}\eta^n\eta^{l-1}
\bigg)\frac{1}{\left(\displaystyle
\sum_{n=0}^t \eta^n
\right)}. 
\end{align}
Then, if (C1) holds, exploiting $\sum_{n=0}^t(1-\gamma)^n\leq 1/\gamma$ for all $t$, from (\ref{descent_function4}) we obtain:
\begin{align}\label{descent_function5}
&g_{best}^t \leq L(\overline{\mathbf{w}}^{0}_{\mathcal{S}})+\frac{c_LG\|\overline{\mathbf{w}}_{\mathcal{S}^\perp}^{0}\|\eta}{\gamma}+\frac{c_L}{2}(t+1)G^2\eta^2\nonumber\\
&\qquad\qquad\quad+\frac{c_LG^2\eta^2}{\gamma}(t+1)\bigg)\frac{1}{(t+1)\eta}.
\end{align}
Thus, from (\ref{descent_function5}), it is easy to see that
\begin{align}\label{descent_function6}
&\lim_{t\rightarrow\infty}\;g_{best}^t\leq \frac{c_LG^2(2+\gamma)}{2\gamma}\eta\,=\, O(\eta),
\end{align}
with convergence rate $O\left(\frac{1}{t+1}\right)$, thus proving (\ref{conv_nonconvex1}).

Finally, using (C2) in (\ref{descent_function4}), and exploiting (\ref{lemma3a}) and (\ref{lemma3b}), we directly obtain: \begin{align}\label{lim_inf}
\lim_{t\rightarrow\infty}\;g_{best}^t=\displaystyle\lim_{t\rightarrow\infty}\inf_t\;\|\nabla L(\overline{\mathbf{w}}^t_{\mathcal{S}})\|_{\mathbf{\Pi}_{\mathcal{S}}} =0.
\end{align}
Using similar arguments as in \cite[p.1887]{Dan-Facch-Kung-Scut} (omitted due to lack of space), we can also prove:
\begin{align}\label{lim_sup}
\displaystyle\lim_{t\rightarrow\infty}\sup_t\;\|\nabla L(\overline{\mathbf{w}}^t_{\mathcal{S}})\|_{\mathbf{\Pi}_{\mathcal{S}}} =0.
\end{align}
In conclusion, from (\ref{lim_inf}) and (\ref{lim_sup}), we must have:
\begin{equation}\label{conv_stat_point}
\lim_{t\rightarrow\infty} \|\nabla L(\overline{\mathbf{w}}^t_{\mathcal{S}})\|_{\mathbf{\Pi}_{\mathcal{S}}} =0,
\end{equation}
i.e., the sequence $\{\overline{\mathbf{w}}^t_{\mathcal{S}}\}_t$ converges to a stationary point of (\ref{eq:opt_dist}) [cf. (\ref{stat_point})]. This concludes the proof of Theorem 2.

\bibliographystyle{IEEEtran}
\balance
\bibliography{references}
\end{document}